\theoremstyle{plain}
\newtheorem{theorem}{Theorem}[section]
\newtheorem{proposition}[theorem]{Proposition}
\newtheorem{lemma}[theorem]{Lemma}
\newtheorem{corollary}[theorem]{Corollary}
\theoremstyle{definition}
\newtheorem{definition}[theorem]{Definition}
\theoremstyle{remark}
\newcommand{\blackboxconstant}{24 \times 10^4 \log(N)^4}
\newcommand{\D}{\bar D}
\newcommand{\J}{\bar J}
\newcommand{\unexploredj}{\bar j}
\newcommand{\jbar}{\unexploredj}
\newcommand{\R}{\mathbb{R}}
\renewcommand{\P}{\mathbb{P}}
\newcommand{\Pjzero}{\P^{(j_0)}}
\newcommand{\Pjbar}{\P^{(\jbar)}}
\newcommand{\Pj}{\P^{(j)}}
\newcommand{\Ejzero}{\E^{(j_0)}}
\newcommand{\ti}{_{t,i}}
\newcommand{\Ti}{_{T,i}}
\newcommand{\tit}{_{t,i_t}}
\newcommand{\tj}{_{t,j}}
\newcommand{\tk}{_{t,k}}
\newcommand{\tkl}{_{t,k,l}}
\newcommand{\skl}{_{s,k,l}}
\newcommand{\tikl}{^{k,l}_{t,i}}
\newcommand{\hLoss}{\widehat{L}}
\newcommand{\hl}{\hat{\ell}}
\newcommand{\Nout}{N^{out}}
\newcommand{\Nin}{N^{in}}
\newcommand{\E}{\mathbb{E}}
\newcommand{\EE}[1]{\E\left[{#1}\right]}
\newcommand{\I}{\mathbb{I}}
\newcommand{\actiont}{{i_t}} 
\newcommand{\range}[2]{#1,\,\dots,\,#2}
\newcommand{\mainAlgorithm}{\textsc{Exp3-EX}\xspace}
\newcommand{\etat}{\eta_t}
\newcommand{\gammat}{\gamma_t}
\newcommand{\RI}{R^*_{I}}
\newcommand{\Rstar}{R^*}
\newcommand{\Qstar}{Q^*}
\newcommand{\QI}{Q^*_{I}}
\newcommand{\QIDelta}{Q^*_{I,\Delta}}
\DeclareMathOperator{\KL}{KL}
\DeclareMathOperator*{\argmax}{arg\,max}
\DeclareMathOperator*{\argmin}{arg\,min}
\newcommand{\bp}{\bm{p}}
\newcommand{\bpi}{\boldsymbol{\pi}}
\newcommand{\sumt}{\sum_{t\in[T]}}
\newcommand{\sumi}{\sum_{i\in[N]}}
\newcommand{\sumj}{\sum_{j\in[N]}}
\newcommand{\sumk}{\sum_{k\in[K]}}
\newcommand{\suml}{\sum_{l\in[L]}}
\newcommand{\sumJ}{\sum_{i\in \J\tkl}}
\newcommand{\sumJprime}{\sum_{i\in \J'\tkl}}
\newcommand{\sumkl}{\sum_{\substack {k\in[K] \\ l\in[L]}}}
\newcommand{\cN}{\mathcal{N}}
\newcommand{\todo}[1]{{\color{red} #1}}
\newcommand{\alex}[1]{{\color{blue}}}
\newcommand{\tomas}[1]{{\color{cyan}}}
\newcommand{\hide}[1]{{\color{orange}}}
\icmltitlerunning{Online Learning with Feedback Graphs: The True Shape of Regret}
\begin{document}

\twocolumn[
\icmltitle{Online Learning with Feedback Graphs: The True Shape of Regret}

\icmlsetsymbol{equal}{*}

\begin{icmlauthorlist}
\icmlauthor{Tomáš Kocák}{up}
\icmlauthor{Alexandra Carpentier}{up}
\end{icmlauthorlist}

\icmlaffiliation{up}{Institute of Mathematics, University of Potsdam, Germany}

\icmlcorrespondingauthor{Tomáš Kocák}{kocak@math.uni-potsdam.de}

\icmlkeywords{Machine Learning, ICML}

\vskip 0.3in
]

\printAffiliationsAndNotice{}

\begin{abstract}
  Sequential learning with feedback graphs is a natural extension of the multi-armed bandit problem where the problem is equipped with an underlying graph structure that provides additional information - playing an action reveals the losses of all the neighbors of the action. This problem was introduced by \citet{mannor2011} and received considerable attention in recent years. It is generally stated in the literature that the minimax regret rate for this problem is of order $\sqrt{\alpha T}$, where $\alpha$ is the independence number of the graph, and $T$ is the time horizon. However, this is proven only when the number of rounds $T$ is larger than $\alpha^3$, which poses a significant restriction for the usability of this result in large graphs. In this paper, we define a new quantity $\Rstar$, called the \emph{problem complexity}, and prove that the minimax regret is proportional to $\Rstar$ for any graph and time horizon $T$. Introducing an intricate exploration strategy, we define the \mainAlgorithm algorithm that achieves the minimax optimal regret bound and becomes the first provably optimal algorithm for this setting, even if $T$ is smaller than $\alpha^3$.
\end{abstract}

\section{Introduction}

In this paper, we consider a sequential decision-making problem in an adversarial environment. This problem consists of $N$ actions, $T$ rounds, and sequence of losses $(\ell\ti)_{(t,i)\in[T]\times[N]}$ where $[K]\triangleq\{1,\dots,K\}$. Each loss $\ell\ti$ is associated with round $t$ and action $i$. We do not impose any statistical assumptions on the losses provided by the environment. Instead, we assume that the losses are set by an oblivious adversary before the learning process begins. The only assumption on the losses is that they are bounded in $[0,1]$, otherwise, the losses can be completely arbitrary and change in every round.

The learning process, or the game, proceeds in rounds. In round $t$, the learner picks one of the actions denoted by $i_t$ and incurs associated loss $\ell\tit$. The learner also observes loss $\ell\tit$ itself and possibly, losses of some other actions. The set of observations depends on the feedback scheme, we discuss different feedback schemes later.

The goal of the learner is to minimize the total loss received at the end of the game, after $T$ rounds. This is equivalent to minimizing the difference between the total loss of the learner and the loss of the strategy that plays the best-fixed action in hindsight, after $T$ rounds. We refer to this difference as regret and define it as

\[
  R_T \triangleq \max_{i\in[N]} \E\bigg[ \sumt (\ell_{t,\actiont} - \ell\ti)\bigg],
\]

where the expectation is taken over the potential randomization of the environment and the learner.

A quantity of interest to characterize the difficulty of such a sequential decision-making problem is what we will refer to here as the minimax regret, namely the regret incurred by the best possible strategy - the choice of actions $(i_t)_t$ - on the most difficult possible bandit problem - the choice of loss sequences $(\ell_{t,i})_{t,i}$. Note that the minimax regret depends on the feedback scheme considered.

Traditionally, this problem is studied under different feedback schemes. The most relevant schemes for our paper are the following:

\paragraph*{Full-information feedback} \cite{cesa-bianchi1997,littlestone1994,vov1990}, sometimes called prediction with expert advice. This feedback is the simplest since the learner has access to all losses. At the end of round $t$ the learner observes whole loss vector $(\ell_{t,1}\,\dots, \ell_{t,N})$. The minimax rate for this feedback scheme is $\sqrt{T\log(N)}$ and is attained by the EXP algorithm \citep{cesa-bianchi2006}. Note that having access to all the losses in every round causes the minimax rate scale only as $\sqrt{\log(N)}$ with the number of actions. 

\paragraph*{Bandit feedback} \citep{thompson1933,robbins1952,auer1995}. In every round, the learner observes only the loss of the selected action, namely $\ell_{t, i_t}$, while the losses of other actions are not disclosed. The minimax rate for this feedback scheme is $\sqrt{NT}$ \citep{audibert2010} and is attained by \textsc{INF} (Implicitly Normalized Forecaster) algorithm by \citet{audibert2010}. Having only one observation per round results in a scaling of the regret with $\sqrt{N}$ which is significantly worse than in the full-information feedback.

\paragraph*{Graph feedback} \citep{mannor2011,alon2013,alon2015,alon2017,kocak2014,kocak2016,kocak2016b,esposito2022}. In the graph feedback setting, the actions are vertices of a graph and in every round, the learner observes the loss of the selected action (so that the setting is strongly observable) as well as the losses of all its neighbors - see Section \ref{sec:setting} for a precise definition. This is the feedback scheme that we consider in this paper, which is an intermediary between full-information and bandit feedback and contains both these settings. Similarly to what happens in the bandit setting, the algorithms for bandits with graph feedback need to balance \emph{exploration} of actions with \emph{exploitation} of already acquired knowledge. In the graph feedback setting, however, different actions might provide different amounts of exploration, as an action also provides information on the losses of its neighbors. So that balancing exploration and exploitation in this context is more delicate, and efficient algorithms will need to adapt to the graph structure -  and the minimax regret will also be graph dependent. In this setting, a relevant graph-dependent quantity is the independence number $\alpha$ of the graph (see Definition \ref{def:independence-number}). Several algorithms with different approaches have been proposed, \textsc{ELP} \cite{mannor2011}, \textsc{Exp3-SET} and \textsc{Exp3-DOM} \citep{alon2013}, \textsc{Exp3-IX} and \textsc{FPL-IX} \citep{kocak2014}, \textsc{Exp3.G} \citep{alon2015}. While these algorithms differ in their approach to exploration, assumptions on the graph disclosure, or computational complexity, the common denominator is that all of these algorithms' upper bounds on the regret, in the case of strongly observable graphs, are of order $\sqrt{\alpha T}$ up to logarithmic terms, regardless of time horizon $T$. All of the aforementioned algorithms were inspired by the lower bound for the setting proposed by \citet{mannor2011}, which states that if  $T\ge 374\alpha^3$, the minimax regret is lower bounded by a quantity of the order $\sqrt{\alpha T}$ - see Proposition \ref{prop:mannor-lower-bound} for a precise quotation of their result. This poses the question of what happens for large graphs - or equivalently when $T$ is small - and whether current algorithms are also optimal in this case. This is a very important question since even in a moderately large problem and for some graphs, where the independence number is in the hundreds, we need to have millions of rounds for this assumption to hold.

\paragraph*{Partial monitoring.}
A bit further from the setting that we consider in this paper, yet related to it, is the field of partial monitoring \citep{rustichini1999,audibert2010,lattimore2019,lattimore2020}, where the action selection is decoupled from the feedback. An example of this which is very relevant for us is weakly observed graphs - see~\citep{alon2015} - which is a generalization of the graph feedback setting where not all self-loops are included, which means that one does not necessarily observes the loss of the action that one selects. The algorithm \textsc{Exp3.G} therein takes advantage of small dominating sets of vertices - i.e.~sets of vertices whose joint set of neighbors are all vertices, see Section \ref{sec:setting} for a precise definition - to explore efficiently the vertices, and then focus on promising actions. While not developed for the setting considered in this paper, this approach opens however interesting perspectives in cases of large graphs with a few very connected vertices, and we will discuss this more in detail in Subsection~\ref{sec:motivational-example}.

\subsection{Contribution}

In this paper, we focus on the setting with graph feedback - see Section \ref{sec:setting} - and our aim is to pinpoint the minimax regret in the missing case presented in the corresponding paragraph above, namely for large graphs where $T$ is of smaller order than $\alpha^3$.

The first important remark that we make in this paper is that there are some simple cases of large graphs where it is possible to achieve a minimax regret of much smaller order than $\sqrt{\alpha T}$, which is the current best known upper bound. This is e.g.~the case when there is one action that is connected in the graph to all other actions, and that is therefore very informative. In this case, if $T$ is of smaller order than $\alpha^3$, a minimax optimal strategy would make heavy use of this action in order to explore the other actions, even if this action is sub-optimal. We detail such an example in \ref{sec:motivational-example}. This is very different from what current algorithms in the (strongly observable) graph bandit literature do and is more related to some strategies in partial monitoring, see e.g.~\citep{lattimore2019} and also~\citep{alon2015} that we will discuss in detail later. Starting from this remark, the main result of this paper is to pinpoint, for any time horizon $T$ and any given graph, the minimax regret up to logarithmic terms. We first provide a more refined lower bound in Section \ref{sec:lower-bound}, that holds for any graph and time horizon - therefore also in the case where $T< 374\alpha^3$ which is not covered by the state of the art lower bound in \citet{mannor2011} - and that involves a more subtle graph dependent quantity than the independence number. Then, in Section \ref{sec:algorithm}, we provide \mainAlgorithm algorithm (EX stands for \textbf{E}xplicit e\textbf{X}ploration) that matches this lower bound up to logarithmic terms, and whose particularity is that it explores informative actions in a refined and explicit way.

\section{Problem Setting}
\label{sec:setting}

In this section, we formally define the setting introduced by \citet{mannor2011} and provide all the notation used throughout the paper.

We consider an online learning game with a directed observability graph $G=(V,E)$ over the set of actions $V=[N]$ with the set of edges $E \subseteq [N]\times[N]$. The graph contains all the self-loops, i.e. $(i,i)\in E$ for every $i\in V$. The indicator function of an edge from node $i$ to node $j$ is defined as $G_{i,j}\triangleq\I\{(i,j)\in E\}$. The game takes place over $T$ rounds. Before the game starts the environment, potentially adversarial, assigns losses $\{\ell\ti\}_{(t, i)\in[T]\times[N]}$ to every action $i$ and round $t$. We only assume that $\ell\ti \in [0,1]$ for any $t \leq T, i \leq N$.

In every round $t$, the learner picks an action $\actiont\in[N]$, incurs the loss $\ell_{t,\actiont}$, and observes the losses $\ell\ti$ of all out neighbors of $\actiont$, i.e.~of all $i\in V$ such that $(\actiont,i)\in E$\footnote{We write $\Nout_\actiont$ for this set, see definition \ref{def:neighborhood} later.}. Note that in our setting, we always observe the loss of the chosen action since the graph contains all the self-loops. The performance of the learner is then measured in terms of regret - sometimes also called pseudo-regret, or also expected regret - as explained in the introduction

\[
  R_T \triangleq \max_{i\in[N]} \E\bigg[ \sumt (\ell_{t,\actiont} - \ell\ti)\bigg],
\]

where the expectation is taken over the potential randomization of the environment and the learner.

\subsection{Auxiliary Definitions and Statements}

This section sums up all the necessary graph-related definitions we use later throughout the paper.

In bandits with graph feedback, the learner's task is to select an action and observe the losses of its neighbors. Each loss observation can have different sources, either the learner selected the action itself or one of its neighbors. The following definition provides us with a tool to define side observations and their sources more easily.

\begin{definition}\label{def:neighborhood}
  Let $G=(V,E)$ be a graph with the set of vertices $V$ and the set of edges $E$. We define the out-neighborhood of vertex $i\in V$ as
  
\[
    \Nout_i \triangleq \left\{ j \in V \,:\, (i,j)\in E\right\}
\]

and the in-neighborhood of vertex $i\in V$ as

\[
    \Nin_i \triangleq \left\{ j \in V \,:\, (j,i)\in E\right\}
\]

\end{definition}

Playing only a few actions can provide the learner with information about many other actions. Dominating sets and numbers provide a convenient way to describe this phenomenon.

\begin{definition}\label{def:dominating-number}
  Let $G=(V, E)$ be a graph with the set of vertices $V$ and the set of edges $E$. We say that $D\subseteq V$ is a dominating set of $B\subseteq V$ (or that $D$ dominates $B$) from $A\subseteq V$ if $D\subseteq A$ and $B\subseteq \cup_{i\in D} \Nout_i$. We define the dominating number of $B$ from $A$ as  $\delta^A(B) \triangleq \min |D|$ where the minimum is taken over all dominating sets $D$ of $B$ from $A$. In case no such $D$ exists, we define $\delta^A(B)$ as $\infty$. Further, we say that $\delta(B)\triangleq\delta^V(B)$ is the dominating number of $B$ and $\delta \triangleq \delta^V(V)$ is the dominating number of the graph.
\end{definition}

On the other hand, if the actions are not connected by an edge, playing one action does not provide any additional information about the other actions. This is captured in the following definition of independent sets.  

\begin{definition}\label{def:independence-number}
  Let $G=(V,E)$ be a graph with the set of vertices $V$ and the set of edges $E$. We say that $I\subseteq V$ is an independent set of $G$ if for every $i, j\in I$ s.t. $i\not=j$, vertices $i$ and $j$ are not connected by an edge, i.e. $(i,j)\not\in E$. Independence number $\alpha$ of $G$ is the size of the largest independent set of $G$, i.e.

\[
  \alpha \triangleq \max_{I\in\{J\subseteq V\, :\, J \textrm{ is independent}\}} |I|.
\]

\end{definition}


\subsection{Lower Bound by \citet{mannor2011} and Motivational Example}\label{sec:motivational-example}

In this subsection, we quote formally an important and state-of-the-art result of the literature and discuss why some relevant graph feedback examples are not optimally resolved by existing algorithms.

The following proposition restates the lower bound result by \citet{mannor2011}.

\begin{proposition}
  \label{prop:mannor-lower-bound}
  Let $G$ be an observability graph with independence number $\alpha$. Then there exists a series of losses $\{\ell\ti\}_{(t,i)\in[T]\times[N]}$ such that for every $T\ge 374\alpha^3$ and any learner, the expected regret is at least $0.06\sqrt{\alpha T}$
\end{proposition}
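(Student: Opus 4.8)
The plan is to prove this lower bound via a reduction to the classical bandit lower bound, exploiting the structure of independent sets. The key observation is that an independent set $I$ of size $\alpha$ behaves, from the information-theoretic standpoint, essentially like a bandit problem on $\alpha$ arms: since no two distinct vertices in $I$ are connected by an edge, playing an action $i \in I$ reveals the loss of $i$ itself (via the self-loop) but reveals nothing about the other vertices of $I$. Thus if I confine the adversary to assign adversarial, informative losses only to the vertices of $I$, the learner gains no side-observation advantage \emph{within} $I$, and the problem restricted to $I$ is exactly a multi-armed bandit problem with $\alpha$ arms, for which the minimax regret is known to be of order $\sqrt{\alpha T}$.

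\medskip

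\noindent\textbf{Construction of the hard instance.} First I would fix a maximum independent set $I$ with $|I| = \alpha$. For the vertices outside $I$, I would set their losses to a constant (say $1$, or any value that makes them unambiguously suboptimal and uninformative about $I$), so that observing them yields no useful signal and a rational learner is never helped by playing them to learn about $I$. On $I$ itself, I would deploy the standard stochastic construction used in the bandit lower bound: draw a random "good" arm $i^\star$ uniformly from $I$, and let each action $i \in I$ have i.i.d.\ Bernoulli losses with mean $1/2$, except $i^\star$ whose mean is $1/2 - \varepsilon$ for a carefully tuned gap $\varepsilon \asymp \sqrt{\alpha/T}$. (Strictly speaking the statement asks for a \emph{fixed} loss sequence, so at the end I would invoke the usual averaging/derandomization argument: a randomized loss sequence forcing large expected regret implies the existence of a fixed sequence doing the same.)

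\medskip

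\noindent\textbf{Information-theoretic core.} The heart of the argument is to show no learner can identify $i^\star$ quickly. Writing $N_i$ for the expected number of times action $i \in I$ is played, I would compare the distribution $\P^{(i)}$ induced when $i^\star = i$ against a reference distribution, and bound the KL divergence between them. The crucial point, enabled by the independence of $I$, is that the information the learner accumulates about whether $i^\star = i$ comes \emph{only} from the $N_i$ direct pulls of arm $i$ (side-observations from other vertices in $I$ are independent of $i$'s identity, and out-of-$I$ vertices carry no information by construction). A Pinsker-type or chain-rule KL argument then gives that the total "distinguishing information," summed over $i \in I$, is controlled by $\varepsilon^2 \sum_{i \in I} N_i \le \varepsilon^2 T$. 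Since the regret of any learner is at least $\varepsilon\,(T - \E[\text{pulls of }i^\star])$, and the learner cannot concentrate its pulls on the true $i^\star$ across all $\alpha$ possible instances simultaneously, averaging over the uniform choice of $i^\star$ yields $R_T \gtrsim \varepsilon T (1 - \text{something}\cdot \varepsilon\sqrt{T/\alpha})$. Optimizing $\varepsilon \asymp \sqrt{\alpha/T}$ produces the $\sqrt{\alpha T}$ scaling.

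\medskip

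\noindent\textbf{Where the constants and the $T \ge 374\alpha^3$ restriction enter.} The main obstacle — and the reason for the specific numerical constants $0.06$ and $374$ — lies in making the above averaging rigorous with explicit constants, and this is precisely where the cubic restriction is forced. The KL bound must be valid, which requires $\varepsilon$ to be bounded away from $1/2$ so that the Bernoulli losses remain in $[0,1]$ with well-behaved divergences; combined with the optimal choice $\varepsilon \asymp \sqrt{\alpha/T}$, this demands $T$ large relative to $\alpha$. Pushing the averaging argument so that the learner provably spreads its pulls thinly over the $\alpha$ arms — i.e.\ that for most instances the true arm is pulled at most a $\cO(1/\alpha)$ fraction of the time — needs enough rounds that the bandit concentration phenomenon kicks in uniformly, and tracing the constants through Pinsker's inequality and the optimization forces a bound of the form $T \ge c\,\alpha^3$. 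I expect the genuinely delicate step to be this constant-tracking: the information-theoretic skeleton is standard, but extracting the clean $0.06\sqrt{\alpha T}$ with the explicit threshold $374\alpha^3$ requires careful bookkeeping rather than new ideas. This is, in fact, exactly the limitation that the rest of the paper sets out to remove by replacing the independence number with the finer complexity quantity $\Rstar$.
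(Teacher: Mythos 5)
The paper does not prove this proposition at all --- it is quoted verbatim from \citet{mannor2011} --- so your attempt can only be judged against the correct argument, and there it has a genuine gap. The flaw is the sentence claiming that ``the information the learner accumulates about whether $i^\star = i$ comes \emph{only} from the $N_i$ direct pulls of arm $i$'' because ``out-of-$I$ vertices carry no information by construction.'' Setting the losses of vertices outside $I$ to the constant $1$ makes those vertices costly to \emph{play}, but it does not make them uninformative: the feedback structure is fixed by the graph, and a vertex $v\notin I$ may have out-edges into many vertices of $I$. Playing $v$ reveals the actual Bernoulli losses of those $I$-neighbours, which are exactly the samples that identify $i^\star$. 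Independence of $I$ only kills side-observations \emph{between} members of $I$; it says nothing about in-neighbours of $I$ lying outside $I$. A correct change-of-measure must therefore charge arm $i$ with \emph{all} of its observations, $\sum_k G_{k,i}\,T_{T,k}$, not just its own pulls --- this is precisely how the paper's Lemma~\ref{lem:information} (the generalization of this proposition) sets up the KL computation.

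This is not a bookkeeping issue; it is the central phenomenon of the whole paper. On the star graph of Lemma~\ref{lem:star-graph-example} your argument would ``prove'' a $\sqrt{\alpha T}=\sqrt{(N-1)T}$ lower bound for every $T$, yet the hub can be played about $T/\alpha$ times to collect the required $1/\varepsilon^2 = T/\alpha$ observations of every arm of $I$ simultaneously, at total regret of order $T/\alpha$, which is smaller than $\sqrt{\alpha T}$ exactly when $T<\alpha^3$. This is also the true origin of the threshold $T\ge 374\alpha^3$: it is the regime in which exploring $I$ from outside is no cheaper than exploring it from inside, so the classical bandit rate survives. Your attribution of the cubic restriction to keeping $\varepsilon$ bounded away from $1/2$ (which only needs $T\gtrsim\alpha$) and to generic concentration misses this entirely. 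A repaired proof must run the information argument with the full observation counts and show that, when $T\ge 374\alpha^3$, any policy either under-observes some arm of $I$ (and fails to find $i^\star$) or pays at least $\sqrt{\alpha T}$ in exploration cost, whether that exploration happens inside or outside $I$.
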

It is important to note that the statement assumes that $T$ needs to be large  - $T\ge 374\alpha^3$ - for the lower bound to hold. As mentioned in the introduction, some existing algorithms match this lower bound up to logarithmic factors \citep[Corollary 1]{kocak2014} \citep[Theorem 1]{alon2015}. These algorithms also function when $T< 374\alpha^3$ where the best known upper bounds on the regret are of order $\sqrt{\alpha T}$ up to logarithmic factors. However, since the existing lower bound stated above does not cover this case, it is therefore unclear whether those algorithms are optimal or not.


The following lemma demonstrates that $\sqrt{\alpha T}$ is indeed not the correct rate.

\begin{lemma}
\label{lem:star-graph-example}
    Let $G = (V,E)$ be a graph with $|V| = N$ and $E = \{(N,i): i\in[N-1]\}$ (see Figure~\ref{fig:motivational-example}). Then, there exists an algorithm such that the regret upper bound of this algorithm is of $\delta^{1/3}T^{2/3}$ where $\delta = 1$ is the dominating number of $G$. 
\end{lemma}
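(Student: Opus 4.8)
The plan is to exploit the fact that the single center vertex $N$ dominates the whole graph: since $(N,i)\in E$ for all $i<N$ and all self-loops are present, we have $\Nout_N = V$, so playing action $N$ reveals the entire loss vector, while each leaf $i<N$ reveals only its own loss. I would therefore run an \textsc{Exp3}-type algorithm over all $N$ actions with an importance-weighted estimator, but force a small amount of \emph{explicit exploration} on the dominating vertex $N$. Concretely, let $w_t$ be the exponential-weights distribution, updated by $w_{t+1,i}\propto w_{t,i}\exp(-\eta\hat\ell_{t,i})$, and play according to the mixture $p_t=(1-\gamma)w_t+\gamma e_N$, where $e_N$ is the point mass on the center and $\gamma,\eta$ are parameters to be tuned. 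The estimator is $\hat\ell_{t,i}=\ell_{t,i}\I\{i\in\Nout_{i_t}\}/q_{t,i}$, where $q_{t,i}=\P(i\in\Nout_{i_t})$ is the probability that the loss of action $i$ is observed; it is unbiased because $q_{t,i}>0$ for every $i$.

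The first key step is to compute the observation probabilities under $p_t$. A leaf $i<N$ is observed exactly when the learner plays $i$ or the center $N$, giving $q_{t,i}=p_{t,i}+p_{t,N}\ge\gamma$, whereas the center is observed only through its self-loop, so $q_{t,N}=p_{t,N}\ge\gamma$. The uniform lower bound $q_{t,i}\ge\gamma$ is exactly what explicit exploration buys: it controls the variance of every estimator, including that of the center itself, which would otherwise be unobserved whenever we do not play it.

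Next I would decompose the regret against an arbitrary fixed action $k$. Writing $p_t\cdot\ell_t-\ell_{t,k}=(1-\gamma)(w_t\cdot\ell_t-\ell_{t,k})+\gamma(\ell_{t,N}-\ell_{t,k})$ and using $\ell\in[0,1]$, the exploration term contributes at most $\gamma T$. For the exponential-weights term I would invoke the standard \textsc{Exp3} inequality, which after taking expectations and using unbiasedness is bounded by $\frac{\log N}{\eta}+\frac{\eta}{2}\sum_t\E[\sum_i w_{t,i}\hat\ell_{t,i}^2]$. The second-order term is controlled using $\E[\hat\ell_{t,i}^2]\le 1/q_{t,i}$ together with $w_{t,i}\le p_{t,i}/(1-\gamma)$: the leaf contributions satisfy $\sum_{i<N}p_{t,i}/q_{t,i}\le\sum_{i<N}p_{t,i}/\gamma\le 1/\gamma$, while the center contributes $p_{t,N}/q_{t,N}=1$, so per round $\sum_i w_{t,i}\hat\ell_{t,i}^2$ is of order $1/\gamma$ in expectation. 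Altogether $R_T\lesssim \gamma T+\frac{\log N}{\eta}+\frac{\eta T}{\gamma}$.

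Finally I would optimize the parameters. Choosing $\eta=\sqrt{\gamma\log N/T}$ balances the two \textsc{Exp3} terms into an expression of order $\sqrt{T\log N/\gamma}$, and then choosing $\gamma=T^{-1/3}$ balances $\gamma T$ against $\sqrt{T/\gamma}$, both becoming $T^{2/3}$; since $\delta=1$ this is exactly the claimed $\delta^{1/3}T^{2/3}$ (up to a $\sqrt{\log N}$ factor). The main obstacle — and the whole point of the example — is conceptual rather than technical: one must recognize that paying a cost linear in $\gamma$ for exploration on a deliberately sub-optimal but maximally informative action is worthwhile, because it turns the hard-to-estimate star into an essentially full-information problem. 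This is precisely the trade-off that produces the $T^{2/3}$ rate instead of the naive $\sqrt{\alpha T}=\sqrt{(N-1)T}$ bound, which is far larger whenever $N\gg T^{1/3}$. A careful write-up must additionally verify unbiasedness of the estimators and that the \textsc{Exp3} regret inequality applies to the mixture distribution, but these are routine once the observation probabilities above are in hand.
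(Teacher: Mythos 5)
Your proposal is correct, but it takes a genuinely different (and more self-contained) route than the paper. The paper does not analyze any algorithm for this graph directly: it notes that the instance only becomes harder when turned into a weakly observable problem (remove the leaves' self-loops so that playing a leaf reveals nothing) and then cites the $\delta^{1/3}T^{2/3}$ guarantee of \textsc{Exp3.G} from \citet{alon2015}, whose dominating-set exploration specializes here to forcing probability mass $\gamma$ onto the hub. Your algorithm is essentially that specialization, but you carry out the analysis from scratch: the decomposition into the $\gamma T$ exploration cost plus the exponential-weights terms, the variance bound $\sum_i w_{t,i}/q_{t,i}=\mathcal{O}(1/\gamma)$ coming from $q_{t,i}\ge\gamma$, and the tuning $\gamma\asymp T^{-1/3}$ are exactly the ingredients hidden inside the cited theorem. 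The paper's route buys brevity; yours buys a verifiable argument that makes the central trade-off explicit --- paying an instantaneous regret of order $\gamma$ on a deliberately sub-optimal hub converts the star into a near-full-information problem --- which is the conceptual point the lemma is meant to illustrate. Two minor remarks. First, your bound carries a $\sqrt{\log N}$ factor, which improves to $(\log N)^{1/3}$ if you set $\gamma=(\log N/T)^{1/3}$; the paper's statement likewise suppresses the $(\log N)^{1/3}$ present in the \textsc{Exp3.G} bound, so this is consistent with the intended claim. Second, you implicitly assume the self-loops $(i,i)$ are present even though the lemma writes $E=\{(N,i):i\in[N-1]\}$; this matches the paper's standing convention that all self-loops belong to $E$, and it is the reading the paper's own argument relies on, so no correction is needed --- but a careful write-up should say so explicitly.
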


The independence number of the graph from the previous lemma is $N-1$. This also means that whenever $T\ll\alpha^3$ in the lemma above, the regret bound of $\delta^{1/3}T^{2/3}$ is an improvement over the regret bound of $\sqrt{\alpha T}$. See Appendix~\ref{sec:appendix-setting} for the proof and further discussion.

\begin{figure}[t]
\center
  \includegraphics[width=.8\linewidth]{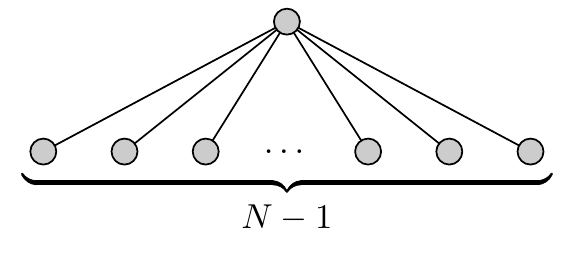}
  \caption{Bandit problem with one hub action that observes all other $N-1$ actions.}
  \label{fig:motivational-example}
\end{figure}


\subsection{Problem Complexity}\label{ss:complexity}

We have seen some indications (e.g. the example in Lemma~\ref{lem:star-graph-example}) that for small $T$, the minimax regret might not scale with $\sqrt{\alpha T}$.

In what follows, we define the graph-dependent problem complexity that will later appear in our lower and upper bounds. This quantity is complex and depends on the graph in a refined way. It is however what one would expect for a worst-case stochastic problem~-~namely a problem where losses $\ell_{i,t}$ are independent and sampled according to a distribution depending on $i$. In order to introduce the problem complexity, we resort to intuitions from the stochastic setting, although we will analyze the problem in an adversarial setting, and provide precise results later, see Theorems~\ref{thm:lower-bound} and~\ref{thm:upper-bound}.

Assume that we are given a set containing all promising actions that could be optimal, given available information - let us call it $I$, in a stochastic setting it would typically be a set of actions whose empirical mean confidence intervals intersect one of the actions with higher lower confidence bound. Let us oversimplify the problem and assume in this informal explication that all these actions but one have a small gap $\Delta>0$ with respect to the optimal action. The optimal action is also in $I$ and has a gap of $0$. When playing an action in $I$, one incurs an average instantaneous regret of $\Delta$ except if one samples the optimal action. We are facing the following choice when we try to find the optimal action: we can either sample in $I$ directly and have small instantaneous regret - namely $\Delta$ - or we can sample outside of this set and have an instantaneous regret that is in all generality bounded by $1$. However, sampling outside of $I$ might still be interesting if some of the actions there are connected to many actions in $I$, providing in this way very informative feedback on many actions therein - see e.g.~Figure~\ref{fig:motivational-example} where even if the hub action is clearly sub-optimal from the samples, it might still be interesting to take advantage of it. At the end of the budget and if one wants to have found the optimal action - and to therefore not pay an instantaneous regret of at least $\Delta$ at each round - one would need as is usual in stochastic bandits to have observed all actions - from inside or outside of $I$- at least $1/\Delta^2$ times. In the stochastic setting, we would therefore expect that the most difficult graph bandit problems would correspond to the worst choice of $(I,\Delta)$.

These considerations drive us to the first definition of the \textit{problem complexity $\Qstar$}.
\begin{definition}\label{def:problem-complexity-Q}
  Let $G=(V,E)$ be a graph and $T$ be the number of rounds. Then the problem complexity $\QI$ for given set $I\subseteq V$ is defined as

  \[
    \QI \triangleq \max_{\Delta\in(0,1/2]} \QIDelta
  \]

  where

  \[
    \QIDelta \triangleq \min_{\bpi\in\Pi}\min\Bigg[ T\sum_{i\in I} \pi_i \Delta + T\sum_{i \not\in I} \pi_i,\, T\Delta \Bigg]
  \]

  and 

  \begin{align*}
    \Pi & = \bigg\{\bpi\in \mathbb R_{+}^{N}:\sumi \pi_i \le1, \\  &\qquad\qquad\qquad T\sumi \pi_i G_{i, j} \geq 1/\Delta^2, \forall j \in I\bigg\}.
  \end{align*}

  Moreover, we define the problem complexity $\Qstar$ as

  \[
    \Qstar \triangleq \max_{I\subseteq V} \QI.
  \]

\end{definition}

$\QIDelta$ would correspond to the regret of the best stationary policy $\pi$ over a problem as described above, for a fixed set $I$ and gap $\Delta$. The worst-case problem is then obtained by taking the worst case of set $I$ and gap $\Delta$.

Unfortunately, the quantity defined in Definition~\ref{def:problem-complexity-Q} is very unintuitive, in that it is unclear how it relates to quantities such as dominating numbers, and independence numbers, of (sub-)graphs. we, therefore, define another relevant notion of the problem complexity $\Rstar$.

\hide{
As we discussed earlier, there is a trade-off between sampling in $I$ and sampling outside of $I$ - and we would expect that an optimal algorithm, would allocate its samples in the best possible way for solving this trade-off, as would e.g.~an optimal stationary policy $\bpi$ that would achieve $\QIDelta$. So that we expect that for an optimal algorithm, there will be two types of actions in $I$: actions in a set $J \subset I$ that are better explored "from inside" - namely either by sampling them directly or by sampling other actions in $I$ that are connected to them - and actions in $I \setminus J$ that are better sampled "from outside" - namely by sampling some actions outside of $I$ that are connected to them. For the best choice of $J$, an intuitive choice would be, to respectively sample a dominating set of $J$ from $I$ for exploring actions in $J$, and a dominating set of $I \setminus J$ for exploring actions in $I \setminus J$. With that in mind, and by taking explicitly a worst case of $I$ and implicitly a worst-case $\Delta$ - remembering that actions in $I$ are $\Delta$-sub-optimal while actions outside of $I$ are $1$-sub-optimal - we, therefore, define another relevant notion of the problem complexity $\Rstar$.
}

\begin{definition}\label{def:problem-complexity-R}
  Let $G=(V,E)$ be a graph and $T$ be a number of rounds. Then the problem complexity $\RI$ for given set $I\subseteq V$ is defined as

  \[
    \RI \triangleq \min_{J\subseteq I}\max\left\{ \delta^I(J)^\frac{1}{2}T^\frac{1}{2},\, \delta^V(I\setminus J)^\frac{1}{3}T^\frac{2}{3} \right\}.
  \]

  Moreover, we define the problem complexity $\Rstar$ as

  \[
    \Rstar \triangleq \max_{I\subseteq V} \RI.
  \]

\end{definition}

This definition is much more tractable from a graph perspective, as it involves only two relevant graph-dependent quantities, namely the dominating set $\delta^I(J)$ of $J$ from $I$, and the dominating number $\delta^V(I\setminus J)$ of $I\setminus J$ from $V$. Here the choice of the optimal policy is reduced to only choosing the set $J$ that is best explored from inside of $I$, and we then select the worst case of $I$.

Interestingly, the following lemma shows that both definitions of the problem complexity are almost equivalent and differ only up to a logarithmic factor. From now on, whenever talking about the problem complexity, we specify which definition we use and the reasons why.
\begin{lemma}\label{lem:complexity-equivalence}
  Let $G=(V, E)$ be a graph and $I\subseteq V$ be any set of actions. Then for the problem complexities $\QI$ and $\RI$, the following inequalities hold. 
  \[
    \RI / (10\log N) \le \QI \le 2 \RI
  \]
\end{lemma}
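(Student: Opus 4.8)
The statement splits into two inequalities, which I treat separately after recording the common intuition. Reading the definitions, $\QIDelta$ is a fractional covering problem: a feasible $\bpi$ must cover every $j\in I$ to depth $1/(T\Delta^2)$ through the edges $G_{i,j}$, paying cost $T\Delta$ per unit of mass placed inside $I$ and cost $T$ per unit placed outside, with the whole thing capped at $T\Delta$ (the regret of committing to $I$ without exploring). The quantity $\RI$, by contrast, fixes a partition of $I$ into a part $J$ explored from inside and a part $I\setminus J$ explored from outside, and balances the two domination terms $\delta^I(J)^{1/2}T^{1/2}$ and $\delta^V(I\setminus J)^{1/3}T^{2/3}$. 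The bridge is the elementary observation that, writing $a=\delta^I(J)$ and $b=\delta^V(I\setminus J)$, the exploration cost of a natural policy is of order $a/\Delta+b/\Delta^2$, and $\max_\Delta\min[a/\Delta+b/\Delta^2,\,T\Delta]$ is attained at the crossover of the two curves and equals, up to a factor $2$, $\max[\sqrt{aT},\,T^{2/3}b^{1/3}]$ — exactly the $\Delta$-free expression inside the $\min_J$ of $\RI$. The $\min_\bpi$/$\max_\Delta$ structure of $\QI$ must therefore be compared with the $\min_J$/(implicit $\max_\Delta$) structure of $\RI$.

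For the upper bound $\QI\le 2\RI$, let $J$ be a minimiser of $\RI$ and set $a=\delta^I(J)$, $b=\delta^V(I\setminus J)$, so $\RI=\max[\sqrt{aT},T^{2/3}b^{1/3}]$. I will show $\QIDelta\le 2\RI$ for every $\Delta$; since $\QI=\max_\Delta\QIDelta$ this suffices. If $T\Delta\le 2\RI$ the cap already gives $\QIDelta\le T\Delta\le 2\RI$ (interpreting $\QIDelta=T\Delta$ when $\Pi$ is empty, i.e.\ when no exploration is affordable). Otherwise $\Delta>2\RI/T$, and I exhibit a feasible $\bpi$: take a dominating set $D_{\mathrm{in}}$ of $J$ from $I$ of size $a$ and a dominating set $D_{\mathrm{out}}$ of $I\setminus J$ from $V$ of size $b$, and put mass $1/(T\Delta^2)$ on each vertex of $D_{\mathrm{in}}\cup D_{\mathrm{out}}$. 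Every $j\in I$ then receives coverage at least $1/\Delta^2$, so the observation constraints hold, and the regret is at most $a/\Delta+b/\Delta^2$ since $D_{\mathrm{in}}\subseteq I$ contributes gap $\Delta$ and $D_{\mathrm{out}}$ gap at most $1$. The inequalities $\Delta>2\sqrt{a/T}$ and $\Delta>2(b/T)^{1/3}$ (both consequences of $\Delta>2\RI/T$) give $a/\Delta<\sqrt{aT}/2\le\RI/2$ and $b/\Delta^2<b^{1/3}T^{2/3}/4\le\RI/4$, so the regret is below $\RI$, and the same two inequalities give $\sum_i\pi_i<1/4+1/16<1$, so $\bpi\in\Pi$. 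Maximising over $\Delta$ closes this direction.

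For the lower bound $\RI\le 10\log N\cdot\QI$, the key point is that the budget constraint $\sum_i\pi_i\le 1$ may be dropped when lower-bounding the regret: enlarging the feasible set only lowers the minimum, so the value $L(\Delta)$ of the relaxed covering LP is a valid lower bound on the exploration cost of any $\bpi\in\Pi$, giving $\QIDelta\ge\min[L(\Delta),T\Delta]$. I then lower-bound $L(\Delta)$ by a domination quantity. Splitting the coordinates of an optimal relaxed $\bpi^\star$ according to whether each $j\in I$ gets at least half of its required coverage from inside $I$ or from outside defines a partition $J\cup(I\setminus J)$; rescaling the inside (resp.\ outside) masses by $2T\Delta^2$ turns them into a fractional dominating set of $J$ from $I$ (resp.\ of $I\setminus J$ from $V$). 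Invoking the standard fact that the integral domination number is at most $(1+\log N)$ times the fractional one, I get $L(\Delta)\ge\frac{1}{2(1+\log N)}\big(\delta^I(J)/\Delta+\delta^V(I\setminus J)/\Delta^2\big)$ for this induced $J$. Finally I choose the single gap $\Delta^\star=\RI/T$ (capped at $1/2$). For an arbitrary partition $J$ with $a=\delta^I(J)$, $b=\delta^V(I\setminus J)$, minimality of $\RI$ forces $\max[\sqrt{aT},T^{2/3}b^{1/3}]\ge\RI$: if the first term dominates then $a/\Delta^\star=aT/\RI\ge\RI$, and if the second dominates then $b/\Delta^{\star2}=bT^2/\RI^2\ge\RI$; either way $a/\Delta^\star+b/\Delta^{\star2}\ge\RI$. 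Combined with the previous bound this yields $L(\Delta^\star)\ge\RI/(2(1+\log N))$, while $T\Delta^\star=\RI$, so $\QI\ge Q^*_{I,\Delta^\star}\ge\RI/(2(1+\log N))\ge\RI/(10\log N)$ for $N\ge 2$.

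The main obstacle is the lower bound, for two reasons. First, there is a genuine order-of-operations mismatch: in $\QI$ the adversary fixes $\Delta$ before the covering policy chooses how to split $I$, whereas $\RI$ fixes the split $J$ first; the max--min inequality gives only one direction for free, and the argument succeeds only because the single choice $\Delta^\star=\RI/T$ is simultaneously bad against every partition $J$ — which is exactly what the two-case dichotomy on which domination term realises $\RI$ guarantees. Second, the $\log N$ factor is not slack to be optimised away but is forced: it is the worst-case gap between fractional and integral domination (set cover), and it enters precisely through the rescaling step. A minor technical nuisance, which I would dispatch separately, is the convention for $\QIDelta$ when $\Pi=\emptyset$ together with the boundary case $\RI>T/2$ where $\Delta^\star$ must be taken equal to $1/2$; both are handled by the same case analysis, with $T\Delta$ playing the role of the achieved value.
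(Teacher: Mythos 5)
Your proposal is correct, and while the upper bound $\QI \le 2\RI$ follows essentially the paper's own route (a uniform mass of $1/(T\Delta^2)$ on the union of the two dominating sets of the optimal partition, then balancing $a/\Delta + b/\Delta^2$ against $T\Delta$), your lower bound is genuinely different. The paper lower-bounds the exploration cost of a feasible $\bpi$ combinatorially: it invokes Lemma 8 of Alon et al.\ (2015) to extract, from $J_{I,\bpi}$ (resp.\ $J'_{I,\bpi}$), an independent set of size $\delta^I(J_{I,\bpi})/(50\log N)$ in which every vertex dominates at most $\log N$ elements, and a counting argument then yields $T\sum_{i\in I}\pi_i \ge \delta^I(J_{I,\bpi})/(100\Delta^2\log^2 N)$ and its analogue outside $I$; the $\log^2 N$ reduces to $\log N$ only after taking the square/cube root in the final optimization over $\Delta$. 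You instead drop the budget constraint, read the coverage constraints as a fractional domination problem, rescale the optimal relaxed masses by $2T\Delta^2$, and invoke the set-cover integrality gap $\delta \le (1+\log N)\delta_f$ — which produces the $\log N$ factor in one step and with a better constant. You are also more careful than the paper on the quantifier order: the paper picks $\Delta$ as a function of $J$ even though $J$ is minimized \emph{inside} the $\max_\Delta$, a step it glosses over, whereas your single choice $\Delta^\star = \RI/T$ is shown to be simultaneously effective against every partition via the dichotomy on which domination term realizes $\RI$. Both proofs share the same untreated boundary case ($\Delta^\star$ or the paper's chosen $\Delta$ exceeding $1/2$), which you at least flag explicitly. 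The one ingredient you should make fully precise if writing this up is the rescaling step: for $j$ in the induced set $J$ one has $T\sum_{i\in I}\pi^\star_i G_{i,j} \ge 1/(2\Delta^2)$, so $x_i = 2T\Delta^2\pi^\star_i$ restricted to $I$ is a feasible fractional dominating solution for $J$ from $I$ of weight $2T\Delta^2\sum_{i\in I}\pi^\star_i$ — this is where the definition of the split (half the coverage from inside) is used, exactly as in the paper's Equation (\ref{eq:J-pi}).
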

The proof of this lemma can be found in Appendix~\ref{sec:problem-complexity-appendix}.
\section{Lower Bound}
\label{sec:lower-bound}

Ever since the introduction of the setting by \citet{mannor2011}, the lower bound in Proposition \ref{prop:mannor-lower-bound} was used to drive the ideas for the algorithms. However, in general, this lower bound holds only when $T\ge374\alpha^3$.

Even though approaches of algorithms and their upper bound analyses differ from paper to paper, most of them are able to match the lower bound for $T\ge374\alpha^3$. Without the lower bound for regimes where $T<374\alpha^3$, there was no incentive for the algorithms to strive for a different rate than the one suggested by Proposition \ref{prop:mannor-lower-bound}. In this section, we present a new lower bound that holds regardless of the value of $T$ and thus, extend the result in Proposition \ref{prop:mannor-lower-bound}. 

The following theorem shows a regret lower bound that scales with the problem complexity $\Rstar$ and is one of the main results of our paper.

\begin{theorem}
  \label{thm:lower-bound}
  Let $G=(V, E)$ be a directed graph with $N = |V|$ and $T$ be a number of rounds. Then, for any learner, there exists a sequence of randomized losses such that regret $R_T$ of the learner is lower bounded as
  \begin{equation*}
    R_T \ge \frac{\Qstar}{2^7}
    \ge \frac{\Rstar}{2^710\log N}.
  \end{equation*}
  where $\Qstar$ and $\Rstar$ are problem complexities  
\end{theorem}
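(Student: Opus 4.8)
The second inequality is immediate from Lemma~\ref{lem:complexity-equivalence}: taking the maximum over $I$ in $\RI/(10\log N)\le\QI$ gives $\Rstar/(10\log N)\le\Qstar$, hence $\Qstar/2^7\ge\Rstar/(2^7\,10\log N)$. It therefore suffices to prove $R_T\ge\Qstar/2^7$. The plan is to fix a set $I$ and a gap $\Delta\in(0,1/2]$ attaining $\Qstar=\QIDelta$ and to exhibit a stochastic environment tailored to this pair. Concretely, I would take a null environment $\P^{(0)}$ in which every action in $I$ has mean loss $1/2$ and every action outside $I$ has mean loss $1$ (so an outside action has gap of order $1$), and, for each candidate optimum $j\in I$, an alternative $\Pj$ obtained from $\P^{(0)}$ by lowering the mean loss of $j$ to $1/2-\Delta$, which makes $j$ uniquely optimal and gives every other action of $I$ a gap exactly $\Delta$. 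Using clipped Gaussian or Bernoulli losses, the randomized loss sequence is the uniform mixture of the $\Pj$, and I would lower bound the Bayes regret $\frac1{|I|}\sum_{j\in I}R_T^{(j)}$, where $R_T^{(j)}$ is the regret under $\Pj$; since this average is at most $\max_j R_T^{(j)}$, a large average certifies the existence of a hard sequence.

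The information-theoretic core is the standard divergence decomposition for graph feedback. Writing $N_j=\sumt G_{i_t,j}$ for the number of rounds in which $j$ is observed and $T_j$ for the number of rounds in which $j$ is played, the environments $\P^{(0)}$ and $\Pj$ differ only in the law of the feedback coming from $j$, so $\KL(\P^{(0)}\Vert\Pj)\le c\,\Delta^2\,\E^{(0)}[N_j]$ for an absolute constant $c$. Introducing the null-average policy $\bar\pi_i=\E^{(0)}[T_i]/T$, one has $\sumi\bar\pi_i\le1$ and $\E^{(0)}[N_j]=T\sumi\bar\pi_iG_{i,j}$, so feasibility of $\bar\pi$ in the set $\Pi$ of Definition~\ref{def:problem-complexity-Q} is exactly the statement that each $j\in I$ is explored at least $1/\Delta^2$ times under the null.

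The argument then splits according to how much exploration the learner actually performs, mirroring the $\min[\,\cdot\,,T\Delta]$ in $\QIDelta$. Let $J^{c}=\{j\in I:\E^{(0)}[N_j]<c'/\Delta^2\}$ be the set of under-explored candidates. If $J^{c}$ is large, then for a constant fraction of its elements $\E^{(0)}[T_j]$ is small (as $\sum_{j}\E^{(0)}[T_j]\le T$), and Pinsker together with the $\KL$ bound gives $\E^{(j)}[T_j]\le\E^{(0)}[T_j]+T\sqrt{\tfrac12\KL(\P^{(0)}\Vert\Pj)}\le T/2$; in that environment the learner plays a non-optimal action at least $T/2$ times, each at gap at least $\Delta$, so $R_T^{(j)}\ge\tfrac12 T\Delta$, which already dominates $\tfrac1{2^7}\QIDelta$ since $\QIDelta\le T\Delta$. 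If instead $J^{c}$ is small, then a constant multiple of $\bar\pi$ satisfies the constraints defining $\Pi$ (with the objective inflated only by that constant), and the regret accumulated while performing this exploration -- charging $\Delta$ to each play inside $I$ and order $1$ to each play outside $I$, and transferring between $\Pj$ and $\P^{(0)}$ by a change of measure -- is bounded below by $\min_{\bpi\in\Pi}[T\Delta\sum_{i\in I}\pi_i+T\sum_{i\notin I}\pi_i]$, the first term of $\QIDelta$. Combining the two cases yields $R_T\ge\tfrac1{2^7}\QIDelta=\tfrac1{2^7}\Qstar$.

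The delicate point -- and the step I expect to be the main obstacle -- is the second case, where one must recover the \emph{per-action} requirement $\E^{(0)}[N_j]\ge1/\Delta^2$ for every $j\in I$ rather than merely an averaged-over-$j$ surrogate: naive uniform averaging over the alternatives only yields the weaker $\sum_{j}\sqrt{\E^{(0)}[N_j]}\gtrsim |I|/\Delta$, which would replace the program value in $\QIDelta$ by a strictly smaller quantity. Controlling this through the set-based dichotomy above -- so that the handful of exceptional under-explored actions is absorbed either into the $T\Delta$ branch or into the constants -- is what lets the bound match $\QIDelta$ rather than a lossy relaxation, and it is also where the explicit constant $2^7$ (accumulated from the factor $\tfrac12$ in Pinsker, the order-$1$ outside gap, and the threshold constant $c'$) is pinned down.
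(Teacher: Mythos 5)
Your overall strategy---a family of stochastic problems indexed by the candidate optimum $j\in I$, the divergence decomposition $\KL\le c\,\Delta^2\,\E[N_j]$ adapted to graph feedback, a stationary allocation $\bar\pi_i=\E[T_i]/T$ linked to the feasible set $\Pi$, and a dichotomy between under- and well-explored arms---is the same as in the paper's Lemma~\ref{lem:information}, and your treatment of the under-explored branch (small KL, indistinguishability, regret of order $T\Delta$ matching the second term of the $\min$ in $\QIDelta$) is essentially the paper's argument, except that the paper uses the Bretagnolle--Huber inequality where you use Pinsker. However, there are two genuine gaps. First, your dichotomy on the set $J^{c}$ of under-explored candidates is ``large vs.\ small'', and the claim that when $J^{c}$ is small ``a constant multiple of $\bar\pi$ satisfies the constraints defining $\Pi$'' is false whenever $J^{c}$ is small but nonempty: an arm $j$ with $\E^{(0)}[N_j]=0$ cannot be made feasible by rescaling, and the program branch of $\QIDelta$ requires \emph{every} $j\in I$ to satisfy the constraint. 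The correct dichotomy, which the paper uses, is the existence of at least one under-explored arm $\bar j$ (in which case that single arm already yields the $T\Delta$ branch) versus none (in which case $\bar\pi$ is genuinely feasible).

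Second, and more seriously, your reference measure $\P^{(0)}$ is symmetric on $I$ (all means equal), so it is not itself a member of the family and playing inside $I$ is free under it. In the well-explored branch you must lower-bound the regret by the objective evaluated at the null allocation, $\Delta\sum_{i\in I}\E^{(0)}[T_i]+\sum_{i\notin I}\E^{(0)}[T_i]$, yet the regret is incurred under some $\Pj$; the ``change of measure'' you invoke to transfer $\E^{(0)}[T_i]$ to $\E^{(j)}[T_i]$ is Pinsker with $\KL(\P^{(0)},\Pj)$ of order $\Delta^2\,\E^{(0)}[N_j]$, which is \emph{large} (at least a constant, possibly of order $\Delta^2 T$) precisely in the well-explored case, so the transfer is vacuous there. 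The paper avoids this entirely by making the base environment $\P^{(j_0)}$ one of the problems and giving arm $j_0$ a strict $\Delta/2$ advantage over the rest of $I$: then $R_T^{(j_0)}\ge\tfrac{T}{2}\big[\sum_{i\in I\setminus\{j_0\}}\pi_i\Delta+\sum_{i\notin I}\pi_i\big]$ holds directly, with $\pi$ defined under that same environment and no measure change needed; the missing constraint at $j_0$ itself is then recovered by running the argument for two distinct base points $j_0',j_0''$ and combining the resulting bounds (this is where the final factor $2^7$ comes from). As written, your construction cannot produce the program branch of $\QIDelta$, which is the part of the bound that goes beyond the classical $\sqrt{\alpha T}$ argument.
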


\textit{Proof idea.} 
The idea of the proof follows standard lower bound proof steps, see e.g. \citep[Chapter 15]{lattimore2020}, with our problem-specific twist. The idea is to create a set of "difficult" stochastic bandit problems and show that no matter what the learner does, there always will be at least two different problems that the learner can not distinguish.

We create the problems by first choosing a set of near-optimal actions $I$ and then setting the gap of every action outside of $I$ to 1 and inside of $I$ to some small constant $\Delta$. The only exception is the optimal action. For different problems, we choose different optimal actions from $I$ and set its gap to 0.

Using information-theoretic tools, we can show that every action needs to be explored enough, i.e. at least $1/\Delta^2$ times, in order to be able to distinguish the problems. The result of the theorem is then obtained by carefully choosing the gap parameter $\Delta$ and the set of difficult actions $I$.

The detailed proof of the theorem can be found in Appendix~\ref{sec:lower-bound-proof}. Note that the lower bound in Theorem~\ref{thm:lower-bound} scales with either of the definitions of the problem complexity. Later, we show that the rate depending on the problem complexity is indeed minimax optimal and we comment on the connection to the rate in the previous papers in Section~\ref{sec:discussion}.
\section{Algorithm}
\label{sec:algorithm}

The algorithm for our setting, similarly to the previous papers, uses exponential weights to define a probability distribution over the set of actions and then samples according to this distribution. Similarly to \textsc{Exp3.G} algorithm by \citet{alon2015}, we add extra exploration to some actions. This extra exploration adapts to the estimated quality of each action, but also to its informativeness, i.e.~to how much it is connected to other promising actions on the graph. 

The construction of the exploration distribution is rather intricate and is the main algorithmic contribution of this paper, and we will discuss it in detail later. We first present the main algorithm, with part devoted to the construction of this exploration distribution. We then present associated theoretical guarantees.

\subsection{Main Algorithm}
\label{sec:main-algorithm}

As is usual in the literature, our algorithm (\mainAlgorithm presented in Algorithm~\ref{alg:main}) updates at each time $t$ a probability distribution $\mathbf{p}_t$. It then plays at each round action $i_t$, which in the graph setting reveals the losses $\ell\ti$ of all its neighbors $i\in \Nout_{i_t}$. These losses enable us to update unbiased estimates of the (cumulative) loss estimates, which will be used in the algorithm.

\paragraph{(Cumulative) Loss estimates.} In the graph setting, the probability $P\ti$ of observing loss $\ell\ti$ is simply the sum of probabilities of playing any of the in-neighbors of arm $i$ and is defined as $P\ti \triangleq \sum_{j\in\Nin_i}p\tj$. This allows us, at the end of each round $t$, to construct, conditionally unbiased loss estimates $\hl\ti$ of the loss of each action

\[
  \hl\ti \triangleq \frac{\ell\ti \I\{i\in\Nout_{i_t}\} }{ P\ti} \qquad \textrm{for all}\qquad i\in[N]
\]

and to also update the cumulative loss estimates as

\[
  \hLoss\ti \triangleq \hLoss_{t-1,i} + \hl\ti \qquad \textrm{for all}\qquad i\in[N].
\]

We now describe the construction of the distribution $\mathbf{p}_t$. As in~\citep{alon2015} and in several other papers from the literature, we mix a distribution based on exponential weights - i.e.~we update at each time $t$ the exponential weights $\mathbf{w}_t$ and define a normalized distribution $\mathbf{q}_t$, as in~\citep{auer2002} - with an exploration distribution $\mathbf{u}_t$. We postpone the construction of the learning distribution to  Section~\ref{sec:mixing-and-partitioning} (summary in Definition~\ref{def:exploration-distribution}), as it is intricate and our main algorithmic contribution. We describe below how we construct $\mathbf{w}_t, \mathbf{q}_t, \mathbf{p}_t$, based on $\mathbf{u}_t$.

\paragraph{Recallibration of the parameters.} Our algorithm first recalibrates at every step the learning rate $\etat$ of the EXP3 part of the algorithm - we describe later in \alex{ref} how it is chosen. Our algorithm uses it to also callibrates the mixing probability $\gammat \triangleq \min\{(\etat T)^{-1}, 1/2\}$ of sampling the exploration distribution $\mathbf{u}_t$.

\paragraph{(Renormalized) Exponential weights.} Based on the cumulative loss estimate $\hLoss_{t-1,i}$ of arm $i$ at time $t$, we can define as in~\cite{auer2002} the exponential weights $w\ti$ as

\[
  w\ti \triangleq \exp(-\etat\hLoss_{t-1,i}) \qquad \textrm{for all}\qquad i\in[N].
\]

 Using these weights, we can construct a distribution simply by re-normalizing them to define

\begin{equation}
  q\ti \triangleq \frac{w\ti}{W_t} \triangleq \frac{w\ti}{\sumj w\tj} \qquad \textrm{for all}\qquad i\in[N]. \label{eq:exp-distribution}
\end{equation}

\paragraph{Mixed distribution.} Based on $\mathbf{u}_t$, we define our sampling distribution as
\begin{equation}
  p\ti \triangleq (1-\gammat)q\ti + \gammat u\ti \qquad \textrm{for all}\qquad i\in[N].  \label{eq:mixed-distribution}
\end{equation}

So far the algorithm is not different from the majority of algorithms designed for the graph setting and it is summarized in Algorithm~\ref{alg:main}. The key difference lies in the exploration distributions $(u\ti)_{i\in[N]}$ leveraging the structure of the graph, and in the learning rates $\etat$ defined later in Theorem~\ref{thm:upper-bound}. Especially exploration distributions $(u\ti)_{i\in[N]}$ set our algorithm apart from the previous algorithms and enables us to improve the upper bound to match the newly proposed lower bound. The following section explains all the details necessary for the definition of the exploration distributions.

\begin{algorithm}[tb]
  \caption{\mainAlgorithm}
  \label{alg:main}
  \begin{algorithmic}
    \STATE {\bfseries Input:} $G = (V,E)$, $\hLoss_{0,i} = 0$ for all $i\in[N]$
    \FOR{$t=1$ {\bfseries to} $T$}
    \STATE Set learning rate $\etat$\hfill(see Theorem~\ref{thm:upper-bound})
    \STATE $\gammat = \min\{(\etat T)^{-1}, 1/2\}$\\
    \STATE $w\ti = (1/N)\exp(-\eta \hLoss_{t-1,i})$ \hfill
    \STATE $W_t = \sumi w\ti$
    \STATE $q\ti = w\ti / W_t$
    \STATE $p\ti = (1-\gammat)q\ti + \gammat u\ti$ \hfill(see Definition~\ref{def:exploration-distribution})
    \STATE Choose $\actiont \sim \bp_t = (\range{p_{t,1}}{p_{t,N}})$
    \STATE Observe losses $\ell\ti$ for $i \in \Nout(i_t)$
    \STATE $P\ti = \sum_{j\in \Nin(i)} p\tj$ \hfill
    \STATE $\hl\ti = \ell\ti \I\{i\in\Nout(i_t)\} / P\ti$
    \STATE $\hLoss\ti=\hLoss_{t-1,i} + \hl\ti$
    \ENDFOR
  \end{algorithmic}
\end{algorithm}

\subsection{Mixing Distribution and Exploration}
\label{sec:mixing-and-partitioning}

In the graph setting, the interest of an algorithm for sampling an arm is not only characterized by the quality of this arm - i.e.~minus its cumulative loss at time $t$ - but also by the informativeness of this algorithm on other relevant arms - namely, whether or not it is connected to many arms with small cumulative loss at time $t$. While a classical adversarial bandit algorithm would take into account the first of these two factors, we need to add extra exploration to take into account the second factor, namely the connections of the arms through the graph structure.

The idea of the algorithm is to homogenize the actions by grouping them up according to their cumulative loss as well as the amount of information they provide and then define the exploration for each partition separately. We create the partitioning in two steps.

\paragraph{Partitioning of the actions into sets $(I\tkl)_{\tkl}$.} For every round $t$, we create partitions $\{I\tk\}_{k\in[K+1]}$, for $K = \lceil5\log_2(N)\rceil$, such that the normalized exponential weights of arms, defined in Equation~\ref{eq:exp-distribution}, are similar within a partition. More precisely define
\[
I\tk \triangleq \left\{ i \in [N]: q\ti \in (2^{-k}, 2^{-k+1}]\right\}.
\]
The last partition $I_{t, K+1}$ contains the rest of the arms, i.e.
\[
I_{t, K+1} \triangleq \left\{i\in [N]: q\ti \le 2^{-K}\right\}.
\]
Note that for every action $i\in I_{t, K+1}$, $q\ti$ can be upper bounded by $1/N^5$.

We further subdivide each set $I\tk$ into subsets that are roughly homogeneous in terms of the numbers of neighbors in $I\tk$.  For every arm $i\in I\tk$, we define $\deg\tk(i)$ as the number of neighbors of $i$ within partition $I\tk$:
\[
  \deg\tk(i) \triangleq |\{j\in I\tk:(i,j)\in E\}|.
\]
For $L = \lceil\log_2(N)\rceil$, we will further subdivide each set $I\tk$ into subsets $\{I\tkl\}_{l\in[L]}$ that are roughly homogeneous in terms of numbers of neighbors in $I\tk$ - namely, arm $i\in I\tkl$ if and only if $\deg\tk(i) \in (N2^{-l}, N2^{-l+1}]$. The following definition summarizes the construction of the partitions.

\begin{definition}
\label{def:partitioning}
  Let $K \triangleq \lceil5\log_2(N)\rceil$ and $L \triangleq \lceil\log_2(N)\rceil$, then for every $(t,k,l)\in[T]\times[K]\times[L]$ we define
  \begin{align*}
    I\tkl \triangleq \Big\{ i\in[N]:\  & q\ti \in \big(2^{-k}, 2^{-k+1}\big], \\ &\deg\tk(i) \in \big(N2^{-l}, N2^{-l+1}\big] \Big\}
  \end{align*}
  and
  \[
    I_{t,K+1} \triangleq \Big\{ i\in[N]: q\ti \le 2^{-K}\Big\}.
  \]
\end{definition}

\paragraph{Partition in exploration sets from inside and outside of $I\tkl$.}

Inspired by the definition of the problem complexity in Definition~\ref{def:problem-complexity-R}, we can define a splitting of every set $I\tkl$, for $k\in[K]$ and $l\in[L]$, into two parts, $J\tkl$ and $J'\tkl \triangleq I\tkl \setminus J\tkl$ that minimize expression
\begin{equation*}
  \max\left\{ \delta^{I\tkl}(J\tkl)^\frac{1}{2}T^\frac{1}{2},\, \delta^V(J'\tkl)^\frac{1}{3}T^\frac{2}{3}\right\}.   
\end{equation*}
We write $\Rstar_{I\tkl}$ the value of this minimum for each given set $I\tkl$. As we discussed in Section~\ref{ss:complexity}, an optimal exploration of the set $J\tkl$ can be done using actions in $I\tkl$, while an optimal exploration of $J'\tkl$ can be performed using actions outside $I\tkl$.

In order to construct our exploration distribution, we would like to have access to the sets $J\tkl$ and $J'\tkl$, and more specifically to some (approximate) dominating sets, in order to be able to define the exploration distribution. While it is possible in theory to find these sets based on the $I\tkl$ and on the graph, solving the optimization problem that leads to them can be computationally very expensive.

For this reason, we do not work directly with the sets $J\tkl, J'\tkl$, but rather with some approximations that are computationally tractable. Such approximations exist, as stated in Corollary~\ref{cor:partition-splitting} below, and are described in Appendix~\ref{sec:convex-program}. 

\begin{corollary}
\label{cor:partition-splitting}
The algorithm described in Appendix~\ref{sec:convex-program}, which is polynomial time in $N$ (as it consists in solving a linear optimization problem under linear constraints) outputs partitions $\J\tkl$, $\J'\tkl$ of $I\tkl$ together with their corresponding dominating sets $\D\tkl$, $\D'\tkl$, which satisfy
\begin{align*}
|\D\tkl| &\le \log(N)\delta^{I\tkl}(\J\tkl),\\
|\D'\tkl| &\le \log(N)\delta^V(\J'\tkl),
\end{align*}
and
\begin{align*}
&\max\left\{ |\D\tkl|^\frac{1}{2}T^\frac{1}{2},\, |\D'\tkl|^\frac{1}{3}T^\frac{2}{3} \right\} \le \\&\qquad\qquad\qquad\qquad\le \blackboxconstant\sqrt{\log(N)}\Rstar_{I\tkl}.
\end{align*}
\end{corollary}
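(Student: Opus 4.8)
The plan is to recognize $\Rstar_{I\tkl}$ as the optimum of a combinatorial min--max problem -- choosing a split $J\subseteq I\tkl$ and then computing two minimum dominating numbers -- and to approximate each of its ingredients by a linear relaxation that is solvable and roundable in polynomial time. Write $I = I\tkl$ for brevity. Note first that $\Rstar_I$ is finite: since every vertex carries a self-loop, $I$ dominates any $J\subseteq I$ from within, so $\delta^I(J)\le|J|<\infty$, and likewise $\delta^V(J')<\infty$.

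The central difficulty is that the defining objective $\max\{\delta^I(J)^{1/2}T^{1/2},\,\delta^V(J')^{1/3}T^{2/3}\}$ couples a subset choice with two NP-hard dominating-number computations whose two terms scale with incompatible exponents. To decouple them I would guess the optimal value by sweeping a threshold $v$ over powers of two. Fixing $v$ turns the min--max into a feasibility question with two separate budgets: $\delta^I(J)^{1/2}T^{1/2}\le v$ and $\delta^V(J')^{1/3}T^{2/3}\le v$ become $\delta^I(J)\le v^2/T=:B_1$ and $\delta^V(J')\le v^3/T^2=:B_2$. Feasibility then asks whether $I$ can be covered by an inside dominating set of size $\le B_1$ together with an outside dominating set of size $\le B_2$, i.e.\ whether each $j\in I$ can be dominated either from $I$ or from $V$. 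I would relax this to the linear program with variables $y_i\ge0$ ($i\in I$) and $x_i\ge0$ ($i\in V$) subject to $\sum_{i\in I}y_i\le B_1$, $\sum_{i\in V}x_i\le B_2$, and $\sum_{i\in\Nin_j\cap I}y_i+\sum_{i\in\Nin_j}x_i\ge1$ for every $j\in I$; this is a linear feasibility problem, solvable in time polynomial in $N$.

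Next I would justify the guess and bound the rounding loss. Since the integral split realizing $\Rstar_I$ is feasible at $v=\Rstar_I$, and feasibility is monotone in $v$, the smallest feasible power of two $v^\star$ satisfies $v^\star\le 2\Rstar_I$. Taking the fractional solution $(x,y)$ at $v^\star$ and applying standard set-cover randomized rounding over $\Theta(\log N)$ repetitions yields integral $D\subseteq I$ and $D'\subseteq V$ that jointly dominate $I$, with $|D|\le c\log(N)B_1$ and $|D'|\le c\log(N)B_2$. Defining $\J$ to be the vertices of $I$ dominated by $D$ and $\J'=I\setminus\J$ (necessarily dominated by $D'$) gives $\delta^{I}(\J)\le|D|\le c\log(N)(v^\star)^2/T$ and $\delta^V(\J')\le|D'|\le c\log(N)(v^\star)^3/T^2$, whence both $\delta^{I}(\J)^{1/2}T^{1/2}$ and $\delta^V(\J')^{1/3}T^{2/3}$ are $O(\sqrt{\log N})\,v^\star=O(\sqrt{\log N})\,\Rstar_I$.

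Finally, to secure the two per-part bounds, I would not reuse $D,D'$ directly -- they are within budget but not certified near-optimal for the specific sets they happen to cover -- but instead recompute, for the now-fixed $\J$ and $\J'$, the $\log(N)$-approximate minimum dominating sets guaranteed by the classical set-cover/dominating-set LP rounding, taking these as $\D\tkl$ and $\D'\tkl$. By construction they satisfy $|\D\tkl|\le\log(N)\delta^{I}(\J\tkl)$ and $|\D'\tkl|\le\log(N)\delta^V(\J'\tkl)$, which are exactly the first two claimed inequalities; combining these with the previous paragraph gives $\max\{|\D\tkl|^{1/2}T^{1/2},\,|\D'\tkl|^{1/3}T^{2/3}\}=O(\log N)\,\Rstar_I$, comfortably below $\blackboxconstant\sqrt{\log N}\,\Rstar_{I\tkl}$ given the enormous constant. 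I expect the main obstacle to be the bookkeeping that makes a single relaxation simultaneously (i) respect both budgets after rounding, (ii) produce a split whose two parts admit good dominating sets, and (iii) yield a deterministic guarantee (via derandomization or an existence argument) rather than a merely high-probability one; once the budget/threshold decoupling is in place, each remaining piece reduces to a textbook $O(\log N)$ set-cover approximation.
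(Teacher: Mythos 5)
Your proposal is correct in its essential structure, but it takes a genuinely different route from the paper. The paper's Algorithm~\ref{alg:proxy} sweeps a logarithmic grid over the gap parameter $\Delta$, solves for each $\Delta$ the stationary-policy linear program $\QIDelta$ from Definition~\ref{def:problem-complexity-Q}, extracts the split $\J = J_{I,\pi^\Delta}$ by classifying each $j\in I$ according to whether at least half of its (fractional) observations under $\pi^\Delta$ come from inside $I$ (Equation~\eqref{eq:J-pi}), runs the greedy dominating-set routine on $\J$ and $\J'$, and finally selects the best $\Delta$ on the grid; the analysis then leans on Lemma~\ref{lem:complexity-equivalence} and hence on the independent-set-inside-dominating-set result of Proposition~\ref{prop:graph-lemma-dominating-number} to convert policy mass into dominating numbers. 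You instead bypass $\Qstar$ entirely: you guess the value $v$ of $\RI$ over powers of two, convert the min--max into two explicit cardinality budgets $B_1=v^2/T$ and $B_2=v^3/T^2$, write a single mixed covering LP (dominate each $j\in I$ either from $I$ within budget $B_1$ or from $V$ within budget $B_2$), round it, and let the rounded cover itself define the split $\J$. Both are sound; yours is arguably more direct and yields a sharper guarantee of order $\log(N)\,\RI$ rather than the paper's $\log(N)^{4.5}\RI$, while the paper's construction buys the reuse of the very LP that underlies the lower bound and the $\QI$--$\RI$ equivalence, and avoids any randomization. The one point you should make rigorous is the deterministic rounding you flag yourself: rather than derandomizing randomized rounding, it suffices to run weighted greedy set cover on the combined instance with cost $1/B_1$ for covers drawn from $I$ and $1/B_2$ for covers drawn from $V$; the greedy guarantee against the LP optimum (whose cost is at most $2$) immediately gives $|D|/B_1+|D'|/B_2\le 2\log N$, which is all your argument needs. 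With that substitution the proof is complete and matches the stated bounds with room to spare.
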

As mentioned, $\J\tkl$ (resp. $\J'\tkl$) serves as a surrogate of $J\tkl$ (resp. $J'\tkl$) and dominating set $\D\tkl$ (resp. $\D'\tkl$) is an approximation of the smallest dominating set of $\J\tkl$ (resp. $\J'\tkl$) from $I\tkl$ (resp.~$V$). While the full construction of these sets is deferred to Appendix~\ref{sec:convex-program}, we discuss and sketch briefly their construction in Subsection~\ref{ss:complexity}.

Having an efficient way of computing partitions $\J'\tkl$ and their dominating sets $\D'\tkl$ allows us to define the following exploration distribution

\begin{definition}
\label{def:exploration-distribution}
let $I\tkl$, for $(t,k,l)\in[T]\times[K]\times[L]$ be a partition of $V$ from Definition~\ref{def:partitioning} and $\D'\tkl$ be a dominating set, from Corollary~\ref{cor:partition-splitting}. Then, we can define,
\begin{equation}
u\ti \triangleq \frac{1}{{KL+1}}\left(\frac{1}{N}+\sumk\suml u\tikl\right) \label{eq:mixing-distribution}
\end{equation}
where
\begin{align*}
  u\tikl & = \frac{1}{|\D'\tkl|} \qquad & \textrm{for all}\qquad i \in \D'\tkl      \\
  u\tikl & = 0 \qquad                 & \textrm{for all}\qquad i \not\in \D'\tkl.
\end{align*}
\end{definition}

Distribution $(u\ti)_{i\in[N]}$ can be seen as a mixture of uniform distributions where the term $1/N$ in Equation~\ref{eq:mixing-distribution} corresponds to the uniform distribution over all the actions and $(u\tikl)_{i\in[N]}$ corresponds to the uniform distribution over set $\D'\tkl$ which, as a consequence, secures exploration of $\J'\tkl$.

\subsection{Main Upper Bound Theorem}
\label{sec:main-theorem}

Utilization of the exploration distributions $(u\ti)_{i\in[N]}$ from the previous section and appropriately tuned learning rates $\etat$ enable us to prove the optimal regret upper bound for Algorithm~\ref{alg:main} stated in the following theorem.

\begin{theorem}
  \label{thm:upper-bound}
Let learning rate $\etat$ is defined as
  \[
  \min_{s\in[t]}\min_{k\in[K]}\min_{l\in[L]}\min\left\{|\D\skl|^{-\frac{1}{2}} T^{-\frac{1}{2}} , |\D'\skl|^{-\frac{1}{3}} T^{-\frac{2}{3}} \right\},
  \]
  where we remind that $\D\tkl$ and $\D'\tkl$ be the dominating sets outputted by the algorithm described in Appendix~\ref{sec:convex-program}. Then the regret of Algorithm \ref{alg:main} is upper bounded as
  \[
    R_T \le 24\times 10^4 \log(N)^5 D \Rstar
  \]
  for
  \begin{align*}
    D &= 4KL + 2 + \big((KL)^2 + KL + 1\big)\log (N), \\
    K &= \lceil5\log_2(N)\rceil, \\
    L &=\lceil \log_2(N)\rceil.
  \end{align*}
\end{theorem}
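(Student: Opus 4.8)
The plan is to run the standard EXP3/Hedge analysis on the exponential-weights distribution $q_t$ with the importance-weighted estimates $\hl\ti$, and to spend the bulk of the work on the resulting variance term, which is where the graph structure and the exploration distribution enter. Fixing a comparator $i^\ast$ and using $\actiont\sim\bp_t$, I would first write $R_T=\E[\sumt(\sumi p\ti\ell\ti-\ell_{t,i^\ast})]$, split $p\ti=(1-\gammat)q\ti+\gammat u\ti$, and bound the exploration contribution $\gammat\sumi u\ti\ell\ti\le\gammat$, so that $R_T$ decomposes into an exploration cost $\sumt\gammat$ and an EXP3 regret $\E[\sumt(\sumi q\ti\ell\ti-\ell_{t,i^\ast})]$. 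The estimators are conditionally unbiased once one checks $P\ti>0$ always — which holds since the self-loops and the uniform $1/N$ mass inside $u_t$ force $P\ti\ge\gammat/((KL+1)N)$ — so the true losses may be replaced by $\hl\ti$ in the EXP3 regret. The exploration cost is then immediate: $\gammat=(\etat T)^{-1}$ whenever uncapped, and $\etat$ is non-increasing in $t$ by the $\min_{s\in[t]}$ in its definition, hence $\sumt\gammat\le\eta_T^{-1}$.

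Next I would carry out the potential argument on $-\etat^{-1}\log\sumi w\ti$, using $e^{-x}\le1-x+x^2/2$ for $x\ge0$ (legitimate since $\hl\ti\ge0$) together with the extra bookkeeping required for a decreasing learning rate, to obtain $\E[\sumt(\sumi q\ti\hl\ti-\hl_{t,i^\ast})]\le\frac{\log N}{\eta_T}+\frac12\sumt\etat\,\E[\sumi q\ti(\hl\ti)^2]$. Taking conditional expectations and using $(\hl\ti)^2\le\I\{i\in\Nout_\actiont\}/P\ti^2$ together with $\EEt{\I\{i\in\Nout_\actiont\}}=P\ti$ reduces the per-round second moment to $\sumi q\ti/P\ti$. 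At this point the whole theorem is reduced to showing that $\eta_T^{-1}$, the leading term $\log N/\eta_T$, and the weighted variance $\sumt\etat\sumi q\ti/P\ti$ are each controlled by $\Rstar$ up to $\mathrm{poly}\log N$ and the factor $D$.

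The core of the argument is the variance sum. I would partition $[N]$ into the sets $I\tkl$ (and the tail $I_{t,K+1}$) and, within each, into $\J\tkl$ and $\J'\tkl$, bounding $\sumi q\ti/P\ti$ piece by piece. For $i\in\J'\tkl$ the dominating set $\D'\tkl$ carries exploration mass $u\tj\ge1/((KL+1)|\D'\tkl|)$ on an in-neighbor $j$, so $P\ti\ge\gammat/((KL+1)|\D'\tkl|)$; since $q\ti\le2^{-k+1}$ and $|\J'\tkl|<2^k$, this piece is $\lesssim(KL)|\D'\tkl|/\gammat$. For $i\in\J\tkl$ there is instead an in-neighbor $j\in\D\tkl\subseteq I\tkl$ with $q\tj>2^{-k}$, so that $P\ti\ge(1-\gammat)2^{-k}$; combining this with the degree sub-partitioning and the size bound $|\D\tkl|\le\log(N)\delta^{I\tkl}(\J\tkl)$ of Corollary~\ref{cor:partition-splitting} targets a bound of order $|\D\tkl|$ for this piece, while the tail $I_{t,K+1}$ contributes only $\mathrm{poly}(1/N)$ because $q\ti\le N^{-5}$. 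The learning rate is then engineered precisely so that these cancel after multiplying by $\etat$ and summing over $t$: $\etat/\gammat=\etat^2T$ with $\etat\le|\D'\tkl|^{-1/3}T^{-2/3}$ turns the outside piece into $\lesssim(KL)|\D'\tkl|^{1/3}T^{2/3}$, and $\etat\le|\D\tkl|^{-1/2}T^{-1/2}$ turns the inside piece into $\lesssim|\D\tkl|^{1/2}T^{1/2}$.

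To finish, Corollary~\ref{cor:partition-splitting} bounds $\max\{|\D\tkl|^{1/2}T^{1/2},|\D'\tkl|^{1/3}T^{2/3}\}$ by $\blackboxconstant\sqrt{\log N}\,\Rstar_{I\tkl}\le\blackboxconstant\sqrt{\log N}\,\Rstar$, so each per-$(k,l)$ term is $\Rstar$ up to logarithms; summing over the $KL$ partitions — the outside piece picking up a second factor $KL$ from the $1/(KL+1)$ mixing, which is the source of the $(KL)^2$ term — and adding $\eta_T^{-1}$ and $\log N/\eta_T$ reproduces the stated $D$ and the global $\log(N)^5$. I expect the genuine obstacle to be the inside piece: the naive bound $q\ti/P\ti\le4$ only gives $|\J\tkl|$, which is far too large, and upgrading it to the dominating number $\delta^{I\tkl}(\J\tkl)$ is exactly what the degree sub-partitioning is designed to achieve; making this coupling rigorous, and simultaneously verifying the decreasing-learning-rate EXP3 inequality with the explicit constants, is the delicate part. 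A secondary point is that $\D\tkl,\D'\tkl,I\tkl$ all vary with $t$, so the cancellations must be justified through the $\min_{s\in[t]}$ definition of $\etat$ and the uniform bound $\Rstar_{I\tkl}\le\Rstar$.
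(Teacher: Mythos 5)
Your outline coincides with the paper's proof in every structural respect: the same reduction to $R_T \le \E\big[2\sumt\gammat + \tfrac{\log N}{\eta_{T+1}} + \sumt\tfrac{\etat}{2}\sumi q\ti/P\ti\big]$ via the mixed distribution and the adaptive-learning-rate potential argument, the same three-way split of the variance term over $\J\tkl$, $\J'\tkl$ and $I_{t,K+1}$, the same use of the exploration mass on $\D'\tkl$ for the outside piece and of the uniform $1/N$ component for the tail, the same $\min_{s\in[t]}$ cancellation against the learning rate, and the same invocation of Corollary~\ref{cor:partition-splitting} and bookkeeping for $D$. The one place where you stop short is exactly the step you flag as the ``genuine obstacle'': bounding $\sum_{i\in\J\tkl} q\ti/P\ti$ by $O(|\D\tkl|)$ rather than $O(|\J\tkl|)$. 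The mechanism you state for this piece --- $P\ti \ge (1-\gammat)2^{-k}$ coming from a single dominating neighbour in $\D\tkl$ --- can never give more than $q\ti/P\ti\le 4$ per arm, hence only $4|\J\tkl|$ in total, and the inequality $|\D\tkl|\le\log(N)\,\delta^{I\tkl}(\J\tkl)$ goes in the wrong direction to repair this. So as written the inside piece is not proved.

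The paper closes this gap with two uses of the degree sub-partitioning that your sketch does not make explicit. First, every $i\in\J\tkl\subseteq I\tkl$ has $\deg\tk(i)> N2^{-l}$ neighbours inside $I\tk$, and each such neighbour $j$ satisfies $p\tj\ge q\tj/2> 2^{-k-1}$; summing the contributions of \emph{all} of these neighbours (not of a single dominator) gives $P\ti\ge N2^{-l}\,2^{-k-1}$, hence $q\ti/P\ti\le 2^{l+2}/N$, a bound that improves as $l$ decreases. Second, since $\D\tkl\subseteq I\tkl$ and every vertex of $I\tkl$ has at most $N2^{-l+1}$ neighbours in $I\tk$, the dominating set can cover at most $|\D\tkl|\,N2^{-l+1}$ vertices, so $|\J\tkl|\le|\D\tkl|\,N2^{-l+1}$. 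Multiplying the per-arm bound by this cardinality bound makes the $N2^{\pm l}$ factors cancel and yields $\sum_{i\in\J\tkl}q\ti/P\ti\le 2^{3}|\D\tkl|$, which is precisely what the calibration $\etat\le|\D\tkl|^{-1/2}T^{-1/2}$ is designed to absorb. With this step supplied, the remainder of your argument matches the paper's.
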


\textit{Proof idea.}
The proof of the theorem relies heavily on the partitioning from the Definition~\ref{def:partitioning} by decomposing the regret along the partitions. Careful construction of partitions allows us to show that the actions corresponding to one individual partition contribute to regret with no more than $\Rstar$, up to logarithmic factors. The fact that the number of partitions $KL+1$ is only polylogarithmic in the number of actions allows us to obtain the final regret bound as the sum of regret bounds for individual partitions. 

The detailed proof of the theorem can be found in Appendix~\ref{sec:upper-bound-proof}. 

\section{Discussion}
\label{sec:discussion}

We have presented the regret lower bound for the setting (Section~\ref{sec:lower-bound}, Theorem~\ref{thm:lower-bound}) as well as the matching, up to logarithmic terms, regret upper bound for the proposed algorithm (Section~\ref{sec:algorithm}, Theorem~\ref{thm:upper-bound}). Together, these two theorems prove that the minimax rate for online learning with feedback graphs is proportional to the problem complexity $\Rstar$ (Definition~\ref{def:problem-complexity-R}). In this section, we compare the minimax rate presented in this paper to the previously known results and emphasize the improvements that we bring to the setting. We focus mainly on two regimes:

When $T$ is large enough when compared to $\alpha^3$, we recover results from the literature, namely a minimax rate of order $\sqrt{\alpha T}$ up to logarithmic terms. We do it by showing that the problem complexity $\Rstar$ is equal to $\sqrt{\alpha T}$ when $T$ is large enough.
    
When $T$ is small, we demonstrate the existence of graphs for which rate $\sqrt{\alpha T}$ is far from optimal. An important consequence of this statement is that all the algorithms proposed in the previous papers prove only suboptimal regret upper bounds for some graphs and budgets $T$. This also means that Algorithm~\ref{alg:main} is the first provably optimal algorithm for the setting in all possible problems and regimes.

\subsection{Regime when $T$ is Large}
\label{sec:large-T-regime}

Previous papers proved that the minimax regret scales with $\sqrt{\alpha T}$ whenever $T\ge 374\alpha^3$ while the minimax regret presented in this paper scales with the problem complexity $\Rstar$ instead. The following corollary shows that the two rates are up to log factors the same when $T$ is large enough.

\begin{corollary}
  \label{cor:rate-for-large-T}
  Let $G$ be a graph with independence number $\alpha$. Then for any $T\ge {\alpha^3}$, the problem complexity $\Rstar$ simplifies to
  \[
  \Rstar = \sqrt{\alpha T}.
  \]
\end{corollary}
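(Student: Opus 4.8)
The plan is to establish the two matching inequalities $\Rstar \le \sqrt{\alpha T}$ and $\Rstar \ge \sqrt{\alpha T}$ separately, and to check that the hypothesis $T \ge \alpha^3$ is needed only for the second. Throughout I use $\RI = \min_{J \subseteq I} \max\{\delta^I(J)^{1/2} T^{1/2},\, \delta^V(I\setminus J)^{1/3} T^{2/3}\}$ and $\Rstar = \max_{I\subseteq V} \RI$ from Definition~\ref{def:problem-complexity-R}.

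For the upper bound, I would bound $\RI$ by evaluating the inner minimum at the single admissible choice $J = I$. Then $I \setminus J = \emptyset$, so $\delta^V(\emptyset) = 0$ and the second term vanishes, giving $\RI \le \delta^I(I)^{1/2} T^{1/2}$ and hence $\Rstar \le \big(\max_{I\subseteq V} \delta^I(I)\big)^{1/2} T^{1/2}$. It therefore suffices to show $\max_{I} \delta^I(I) \le \alpha$. This is the graph-theoretic core: for a fixed $I$, take a maximum independent set $M$ of the induced subgraph on $I$, so $|M| \le \alpha$. Since $M$ is in particular maximal, every vertex of $I$ either lies in $M$ or is adjacent to a vertex of $M$; combined with the self-loops this makes $M$ a dominating set of $I$ from within $I$, whence $\delta^I(I) \le |M| \le \alpha$. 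Note this step uses no assumption on $T$, so the upper bound $\Rstar \le \sqrt{\alpha T}$ holds for all horizons.

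For the lower bound I would exhibit one set attaining $\sqrt{\alpha T}$, namely a maximum independent set $I^*$ with $|I^*| = \alpha$. The key simplification is that inside an independent set the only out-neighbour of a vertex within $I^*$ is the vertex itself, so dominating any $J \subseteq I^*$ from $I^*$ forces $J \subseteq D$, i.e.\ $\delta^{I^*}(J) = |J|$. I would then split the inner minimum into cases. For $J = I^*$ the expression equals $\max\{\alpha^{1/2}T^{1/2},\, 0\} = \sqrt{\alpha T}$. For any $J \subsetneq I^*$ the set $I^* \setminus J$ is nonempty, so $\delta^V(I^* \setminus J) \ge 1$ and the second term is at least $T^{2/3}$; here the hypothesis $T \ge \alpha^3$ is exactly what guarantees $T^{2/3} \ge \alpha^{1/2} T^{1/2} = \sqrt{\alpha T}$. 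Hence every $J$ yields a value at least $\sqrt{\alpha T}$, the minimum is attained at $J = I^*$, and $R^*_{I^*} = \sqrt{\alpha T}$, giving $\Rstar \ge \sqrt{\alpha T}$.

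The main obstacle is the inequality $\max_I \delta^I(I) \le \alpha$ driving the upper bound: this is the one place where the graph structure, rather than mere arithmetic, is used, and it is the step most sensitive to the precise notion of adjacency, since it relies on a maximal independent set being dominating, which in turn uses the self-loops and the symmetric reading of non-adjacency in Definition~\ref{def:independence-number}. The lower bound, by contrast, is essentially arithmetic once one observes $\delta^{I^*}(J) = |J|$; its only delicate point is recognising that the threshold $\alpha^3$ emerges precisely from balancing the $T^{2/3}$ exploration term against the $\sqrt{\alpha T}$ target, which explains both why the equality holds when $T \ge \alpha^3$ and why it can fail for smaller horizons.
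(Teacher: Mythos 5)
Your proof is correct and follows essentially the same route as the paper: the paper likewise argues that $J=I$ is optimal once $T\ge\alpha^3$ (since any nonempty $I\setminus J$ forces the $\delta^V(I\setminus J)^{1/3}T^{2/3}\ge T^{2/3}$ term to dominate $\sqrt{\alpha T}$), bounds $\delta^I(I)\le\alpha$ via a maximum independent set of the induced subgraph being dominating, and attains the value by taking $I$ to be a maximum independent set so that $\delta^I(I)=\alpha$. Your packaging as two separate inequalities, with the explicit observation that $T\ge\alpha^3$ is needed only for the lower bound, is a cleaner presentation of the same argument.
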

The proof for this corollary can be found in Appendix~\ref{sec:proof-rate-for-large-T}. As our upper and lower bounds in Theorems~\ref{thm:lower-bound} and~\ref{thm:upper-bound} match $\Rstar$ up to logarithmic terms, we recover the results from the literature.

\subsection{Regime when $T$ is Small}
\label{sec:small-T-regime}

From the previous section, we know that the minimax rate for large enough $T$ is $\sqrt{\alpha T}$. It is also true that most of the prior algorithms can achieve a regret upper bound that scales with $\sqrt{\alpha T}$. However, at first glance, it is not obvious how significant the improvement of the newly defined problem complexity is. 

We return back to the example from Lemma~\ref{lem:star-graph-example} \hide{Section~\ref{sec:motivational-example}} and introduce a couple of examples demonstrating that rate $\sqrt{\alpha T}$ can be significantly sub-optimal.

\textbf{Example 1.} Lemma~\ref{lem:star-graph-example} provides an example where the graph contains $N-1$ independent vertices and one hub connected to all other vertices. The following Corollary states that the minimax rate for this graph indeed scales with $\delta^{1/3}T^{2/3}$ instead of $\sqrt{\alpha T}$.

\begin{corollary}
  \label{cor:rate-for-star-graph}
  Let $G=(V,E)$ be a graph on $N$ vertices with one hub, i.e. the set of edges is $E = \{(N,i) : i\in[N-1]\}$. Then for any $T < {\alpha^3}$, the problem complexity $\Rstar$ simplifies to
  \[
  \Rstar = T^\frac{2}{3}.
  \]
\end{corollary}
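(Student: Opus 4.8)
The plan is to compute $\Rstar$ directly from Definition~\ref{def:problem-complexity-R} by exploiting the extreme simplicity of the out-neighborhoods in this graph. First I would record the two facts that drive everything: the hub satisfies $\Nout_N = V$ (it sees every vertex, including itself through its self-loop), while each leaf $i \in [N-1]$ satisfies $\Nout_i = \{i\}$ (its only out-edge is the self-loop). From this, the two dominating quantities appearing in $\RI$ collapse: for any nonempty $B$ we have $\delta^V(B) = 1$, since $\{N\}$ alone dominates all of $V$; and for $J \subseteq I$, $\delta^I(J) = 1$ when $N \in I$ (the hub again dominates everything inside $I$), whereas $\delta^I(J) = |J|$ when $N \notin I$, because within a set of leaves each vertex dominates only itself.

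For the upper bound $\Rstar \le T^{2/3}$, I would use that $\RI$ is a minimum over $J \subseteq I$ and simply exhibit the choice $J = \emptyset$. This kills the first term ($\delta^I(\emptyset) = 0$) and leaves $\delta^V(I)^{1/3}T^{2/3}$, which is at most $T^{2/3}$ because the hub dominates every nonempty $I$, so $\delta^V(I) \le 1$. Hence $\RI \le T^{2/3}$ for every $I \subseteq V$, and taking the maximum over $I$ gives $\Rstar \le T^{2/3}$. Note this bound uses no assumption on $T$.

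For the matching lower bound I would take the specific set $I = [N-1]$ consisting of all the leaves, so that $N \notin I$ and $\delta^I(J) = |J|$. Evaluating the inner minimum, the choice $J = I$ gives $\max\{(N-1)^{1/2}T^{1/2}, 0\}$, while any proper subset $J \subsetneq I$ leaves $I \setminus J$ nonempty and hence contributes the term $\delta^V(I\setminus J)^{1/3}T^{2/3} = T^{2/3}$; minimizing the first term over such $J$ (e.g.\ $J = \emptyset$) shows the best proper-subset value is exactly $T^{2/3}$. Therefore $R^*_{[N-1]} = \min\{(N-1)^{1/2}T^{1/2}, T^{2/3}\}$. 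This is the one place the hypothesis enters: since $\alpha = N-1$ and $T < \alpha^3 = (N-1)^3$, we get $N-1 > T^{1/3}$, so $(N-1)^{1/2}T^{1/2} > T^{1/6}T^{1/2} = T^{2/3}$ and the minimum equals $T^{2/3}$. Combining the two bounds yields $\Rstar = T^{2/3}$.

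I expect the only real subtlety --- rather than a genuine obstacle --- to be the bookkeeping in the case split on whether $N \in I$: sets containing the hub give $\RI = T^{1/2}$ (everything can be explored cheaply from the hub-dominated inside), so they never realize the worst case, and it is precisely the hub-free sets of leaves that force the $T^{2/3}$ rate. Verifying that the assumption $T < \alpha^3$ is exactly what is needed to make ``exploring the leaves from inside'' (cost $(N-1)^{1/2}T^{1/2}$) more expensive than ``exploring them from the hub outside'' (cost $T^{2/3}$) is the conceptual heart of the computation.
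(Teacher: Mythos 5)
Your proposal is correct and follows essentially the same route as the paper's proof: both compute $\RI$ by observing that the hub makes $\delta^V(\cdot)\le 1$, reduce the inner minimum to the two candidates $J=\emptyset$ (giving $T^{2/3}$) and $J=I$ (giving $|I|^{1/2}T^{1/2}$ for hub-free $I$), identify $I=[N-1]$ as the worst case, and invoke $T<\alpha^3$ to decide the minimum. Your version is somewhat more explicit than the paper's (separating the upper and lower bounds and checking hub-containing sets), but the underlying computation is identical.
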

The proof for this corollary can be found in Appendix~\ref{sec:proof-rate-for-small-T}. This result also shows that with the increasing number of actions, the gap between $\sqrt{\alpha T}$ and the problem complexity can be arbitrarily large.

\textbf{Example 2.}
Generalizing the previous example, we can create a graph consisting of two parts. A star graph with $1+N_1$ vertices and $N_2$ independent vertices without any edges. Now the problem complexity is of order $(T^{2/3}+\sqrt{N_2 T}) \land \sqrt{(N_1+N_2)T}$ while $\alpha = N_1+N_2$ and $\delta = 1+N_2$. If either $N_2 \geq N_1$, or $T \geq N_1^3$, the problem complexities $R^*$ and $Q^*$ are of order $\sqrt{(N_1+N_2)T} = \sqrt{\alpha T}$ (up to logarithmic terms) as predicted by \citet{alon2015}. However, if $N_2 < N_1$ and $T < N_1^3$ (large star graph), then the problem complexity is of order $T^{2/3}$. This is an example where the minimax rate is much smaller than $\delta^{1/3} T^{2/3}$ or $\sqrt{\alpha T}$. This example also illustrates that the minimax rates from the previous papers are not valid when $T$ is small enough. In contrast, the true minimax rate scales with the problem complexity which demonstrates that it is important to adapt locally to the graph and global quantities like the dominating number, or the independence number, are not complex enough to describe the problem complexity.

\textbf{Example 3.}
Expanding the previous example, we consider a graph where we have $\sum_{k \leq K} (k+1) m_k$ vertices. This graph consists, for each $k \in \{1, \ldots, K\}$, of $m_k$ star graphs with $k+1$ vertices each with no connection to each other. In this case $\alpha = \sum_{k \leq K} k m_k$ and $\delta = \sum_{k \leq K} m_k$. Now, write $A$ for the set of indexes $k$ such that $\sqrt{m_k k T} \geq m_k^{1/3} T^{2/3}$. The problem complexities $Q^*, R^*$ are of order $\sqrt{T\sum_{k \not \in A} m_k k} + (\sum_{k\in A} m_k)^{1/3} T^{2/3})$. For a graph containing some large star graphs, e.g.~whenever $\sup_k m_k k^3 \geq T$, the rate is of order, up to logarithmic terms, of $(\sum_{k \in A} m_k)^{1/3} T^{2/3}$. This can be significantly smaller than $\delta^{1/3} T^{2/3}$ if $A$ is very different from $\{1, \ldots, n\}$, e.g.~when the graph contains a small number of very large star graphs and a moderate number of small star graphs - an extreme case being in the previous example.

These examples highlight that in the case of large graphs that are not homogeneous in the size of their hubs, the problem complexity is not driven by quantities like the dominating number or the independence number, but by some related quantities that are local in the graph. Our algorithm is able to adapt to such local structures.

\hide{
\paragraph{Discussion in the general case.} In the general case, it is of course more complicated, and the more involved to quantity $\Rstar$ drives the minimax regret. As explained in Subsection~\ref{ss:complexity} \alex{insert}, this quantity involves the worst possible set $I$ and the best partition of $I$ in $J$ and $I\setminus J$, the dominating number of $J$ from $I$ through $\sqrt{\delta^I(J)T}$, and the dominating number of $I\setminus J$ through $\delta^V(I\setminus J)^{1/3}T^{2/3}$. As explained in Subsection~\ref{ss:complexity} \alex{insert}, the idea behind this is that some actions in $I$ are best explored from within $I$ - namely those in $J$ - while the others are best explored from outside. The actions within $I$ are the most promising ones, the price to pay for them is lower than the price to pay for exploring from outside of $I$. While this idea was exposed in Subsection~\ref{ss:complexity} \alex{insert} \todo{this ref is probably wrong} in a stochastic setting which is not the one studied here,  Algorithm~\ref{alg:main} reaches this minimax rate in the adversarial setting. This algorithm constructs explicitly, for sets $I_{t,k,l}$ of promising actions with similar degrees constructed sequentially, proxies of associated optimal partitioning in sets that are best explored from inside and outside - namely $\bar J_{t,k,l}$ - and of associated dominating sets - namely $\bar D_{t,k,l}, \bar D'_{t,k,l}$. And it uses these sets to construct the exploration distribution. 
}

\subsection{Exploration Distribution}

We believe that the exploration distribution in Definition~\ref{def:exploration-distribution} plays a crucial role in adapting \textsc{Exp3} algorithm to the setting for small $T$ and that the algorithm is suboptimal without it. In general, exponential weights encourage playing actions with small cumulative loss but neglect actions that are highly informative, i.e. connected to many other actions. To correct this behavior, we look at every partition $I\tkl$ and identify the set $\J'\tkl$ from Corollary~\ref{cor:partition-splitting} and its dominating set $\D'\tkl$. We already know that the optimal way of exploring $\J'\tkl$ is by playing more informative actions in $\D'\tkl$. To enforce this behavior in the \textsc{Exp3} algorithm we simply add extra uniform exploration to actions in~$\D'\tkl$.

\hide{
\subsection{Construction of the Proxies $\bar D_{t,k,l}, \bar D'_{t,k,l}$}

The details of the construction of $\D\tkl$ and $\D'\tkl$ are postponed to Appendix~\ref{sec:convex-program}, however, the procedure is very intuitive. We start with any partition $I$. The optimization problem $\QIDelta$ is convex which allows us to solve it efficiently with any precision. By solving $\QIDelta$, we recover the optimal static policy $\bpi$ for an algorithm to play, i.e. action $i$ should be played $T\pi_i$ times. This allows us to find the actions that are mostly observed from outside of $I$ and thus create set $\J'$. To create a dominating set $\D'$ of $\J'$, we use a simple greedy strategy. Start with an empty set $\D'$, add one vertex that dominates the most vertices of $\J'$, remove dominated vertices from $\J'$, and repeat until every vertex of original $\J'$ is dominated. This produces a dominating set with the smallest size, up to a logarithmic factor.
}

\section*{Acknowledgements}
The work of A. Carpentier is partially supported by the Deutsche Forschungsgemeinschaft (DFG) Emmy Noether grant MuSyAD (CA 1488/1-1), by the DFG CRC 1294 'Data Assimilation', Project A03, by the DFG Forschungsgruppe FOR 5381 "Mathematical Statistics in the Information Age - Statistical Efficiency and Computational Tractability", Project TP 02, by the Agence Nationale de la Recherche (ANR) and the DFG on the French-German PRCI ANR ASCAI CA 1488/4-1 "Aktive und Batch-Segmentierung, Clustering und Seriation: Grundlagen der KI".

\bibliography{paper}
\bibliographystyle{icml2023}


\newpage
\appendix
\onecolumn

\section{Suboptimality of $\sqrt{\alpha T}$ for some Graphs and Proof of Lemma~\ref{lem:star-graph-example}}
\label{sec:appendix-setting}

Consider a graph with $N-1$ independent vertices and one hub vertex connected to all other vertices - see Figure \ref{fig:motivational-example}. The independence number $\alpha$ of this graph is $N-1$. This also means that the regret bounds of existing algorithms scale with $\sqrt{TN}$. This rate is also attained by classical minimax bandit algorithms that do not take the graph structure into account at all. This also means that there is no theoretical advantage to using existing graph-based algorithms. 

However, there are several indications that playing the hub vertex might be a good idea. Especially, when $T$ is small and the graph is large, i.e. $N$ is large.

The first indication comes from the stochastic bandits where even simple \emph{Explore then Commit} algorithm that samples the hub vertex $T^{2/3}$ times and then commits to the empirically best action achieves regret of $T^{2/3}$ \citep[Theorem 6.1.]{lattimore2020}.

Another indication comes from the \textsc{Exp3.G} algorithm by \citet{alon2015}. This algorithm work in a slightly more general setting where the feedback graph can be weakly observable, i.e. the learner does not necessarily knows the loss of the selected actions. We can transform our example from Figure~\ref{fig:motivational-example} to the weakly observable case by not revealing the loss of the selected action to the learner whenever the learner selects a non-hub action. This makes the problem more difficult. Applying the \textsc{Exp3.G} algorithm would result in the regret bound of $\delta^{1/3}T^{2/3}$ where $\delta$ is the dominating number of the graph, in our case $\delta = 1$.

\section{Proof of Lemma \ref{lem:complexity-equivalence}}
\label{sec:problem-complexity-appendix}

We start with the lower bound on $\QI$. The proof is conducted by connecting set $\Pi$ to the underlying graph and selecting a specific $\delta$ that gives us a desirable lower bound.

Every vector $\bpi$ can be associated with a non-adaptive algorithm that plays each arm $i$ with probability $\pi_i$, regardless of past observations. Condition $T\sum_{i\in[N]} \pi_iG_{i,j} \ge \frac{1}{\Delta^2}$ means that, in expectation, every action in $I$ needs to be observed at least $1/\Delta^2$ times. From now on, when we talk about the number of observations of the algorithm or the number of times the algorithm plays an action, we mean the quantity in expectation. Now, we can split set $I$ into subsets

\begin{equation}
   J_{I,\bpi} \triangleq \{ j\in I: \sum_{i\in I} \pi_iG_{i,j} \ge \frac{1}{2} \sum_{i\in [N]} \pi_iG_{i,j}\} \label{eq:J-pi}
\end{equation}
and
\begin{equation}
  J_{I,\bpi}' \triangleq \{ j\in I: \sum_{i\in I} \pi_iG_{i,j} < \frac{1}{2} \sum_{i\in [N]} \pi_iG_{i,j}\} = I\setminus J_{I,\bpi}. \label{eq:J-pi-prime}
\end{equation}

Set $J_{I,\bpi}$ contains all the actions of $I$ with at least half of the observations coming from actions in $I$ while $J_{I,\bpi}'$ contains all the actions of $I$ with at least half of the observations coming from actions in $V\setminus I$

Before proceeding, we state a technical graph proposition \citep[Lemma 8]{alon2015}

\begin{proposition}
  \label{prop:graph-lemma-dominating-number}
  Let $G = (V, E)$ be a graph over $|V| = N$ vertices, and let $I \subseteq V$ be a
  set of vertices whose smallest dominating set is of size $\delta(I)$. Then, $I$ contains an independent set $U$ of size at least $ \delta(I)/ (50\log N)$, with the property that any vertex of $G$ dominates at most $\log N$ vertices of $U$
\end{proposition}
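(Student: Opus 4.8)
\emph{Proof plan.} The plan is to pass to the fractional relaxation of the domination problem, extract a dual (packing) solution supported on $I$, and then round it randomly into an independent set while keeping control of how many of its vertices each vertex of $G$ can dominate.

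First I would set up the two linear programs. The fractional domination number $\delta^*(I)$ of $I$ from $V$ is the value of $\min \sum_{v\in V} x_v$ subject to $\sum_{v\in \Nin_j} x_v \ge 1$ for every $j\in I$ and $x\ge 0$; its LP dual is $\max \sum_{j\in I} y_j$ subject to $\sum_{j\in \Nout_v\cap I} y_j \le 1$ for every $v\in V$ and $y\ge 0$, with the same optimal value $\delta^*(I)$. Let $(y_j)_{j\in I}$ be an optimal dual solution. Because $G$ contains all self-loops, taking $v=j$ in the dual constraint forces $y_j\le 1$ for every $j$. Next, the greedy set-cover guarantee applied to covering the $\le N$ elements of $I$ by the sets $\{\Nout_v\cap I\}_{v\in V}$ gives $\delta(I)=\delta^V(I)\le (1+\ln N)\,\delta^*(I)$, hence $\sum_{j\in I} y_j=\delta^*(I)\ge \delta(I)/(1+\ln N)$.

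Then I would round. Sample $S\subseteq I$ by including each $j$ independently with probability $y_j/2\in[0,1]$. Writing $X=|S|$ and letting $Y$ be the number of (ordered) edges between distinct vertices of $S$, one computes $\E[X]=\tfrac12\delta^*(I)$ and, using the dual constraints, $\E[Y]\le \tfrac14\sum_{i\in I}y_i\sum_{j\in\Nout_i\cap I}y_j\le \tfrac14\delta^*(I)$, so $\E[X-Y]\ge\tfrac14\delta^*(I)$; deleting one endpoint of each such edge turns $S$ into an independent set of size at least $X-Y$. For domination, fix $v\in V$: the number of vertices of $S$ dominated by $v$ is $\sum_{j\in\Nout_v\cap I}\I\{j\in S\}$, a sum of independent Bernoullis with mean at most $\tfrac12$, so a Chernoff bound gives $\P[\,v\text{ dominates}>\log N\text{ vertices of }S\,]\le N^{-2}$ for $N$ large (this is where $(\log N)^{\log N}\gg N^2$ enters), and a union bound over the $N$ vertices makes the bad event $\mathcal{E}=\{\exists v: v\text{ dominates}>\log N\text{ vertices of }S\}$ satisfy $\P[\mathcal{E}]\le 1/N<1/2$.

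Finally I would reconcile the in-expectation size/independence guarantee with the high-probability domination guarantee by a product-expectation step: since $(X-Y)\I\{\mathcal{E}^c\}\ge 0$ and $\E[(X-Y)\I\{\mathcal{E}^c\}]\ge \E[X-Y]-N\,\P[\mathcal{E}]\ge \tfrac14\delta^*(I)-1\ge\tfrac18\delta^*(I)$ (the claim being trivial when $\delta(I)\le 50\log N$, as a single vertex then suffices), there is a realization of $S$ on which $\mathcal{E}$ fails and $X-Y\ge\tfrac18\delta^*(I)$. On it, deleting the at most $Y$ endpoints yields an independent set $U\subseteq S$ with $|U|\ge\tfrac18\delta^*(I)\ge \delta(I)/(8(1+\ln N))\ge \delta(I)/(50\log N)$, and since $U\subseteq S$ every vertex dominates at most $\log N$ vertices of $U$. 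The main obstacle is precisely this domination property: one must pick the rounding scale so that the per-vertex domination has small constant mean yet concentrates sharply enough to survive a union bound over all $N$ vertices, and then combine that (essentially deterministic-after-conditioning) requirement with the purely-in-expectation independence alteration — which is what the product-expectation argument accomplishes. Tracking the constants and the small-$N$ regime (absorbed into the slack factor $50$) is routine once this structure is in place.
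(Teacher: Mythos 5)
The paper never proves this proposition at all: it is imported verbatim from \citet{alon2015} (their Lemma~8), and the appendix simply invokes it as a black box in the proof of Lemma~\ref{lem:complexity-equivalence}. So there is no in-paper argument to match against; what you have written is a self-contained re-derivation, and it takes a route (LP duality plus randomized rounding with alteration) that is different from the hands-on combinatorial construction in the cited reference. Having checked the joints, your argument is sound: strong duality between the fractional-domination covering LP and its packing dual gives $\sum_{j\in I}y_j=\delta^*(I)$, and the greedy set-cover guarantee gives the integrality gap $\delta(I)\le(1+\ln N)\,\delta^*(I)$, which is exactly where the $\log N$ in the final constant comes from; the dual constraint at $v=i\in I$ correctly yields $\E[Y]\le\tfrac14\delta^*(I)$; the restricted-expectation step is valid because $X-Y\le N$ pointwise, and the degenerate case $\delta(I)\le 50\log N$ is indeed handled by a singleton; finally $8(1+\ln N)\le 50\log N$ holds for all $N\ge 2$. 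Two points deserve the ``routine care'' you allude to, and are worth writing out: (i) the crude tail bound $(e\mu/k)^k\le N^{-2}$ at $k=\log N$, $\mu\le 1/2$ only holds for quite large $N$ (roughly $N\gtrsim 2\times 10^4$ with natural logarithms); to cover moderate $N$ you should use the sharper bound $\P[Z\ge k]\le \mu^k/k!$, and observe that the nontrivial case forces $\delta(I)>50\log N$ together with $\delta(I)\le N$, hence $N>50\log N$, after which the numbers do close; (ii) your use of self-loops to get $y_j\le 1$ is fine in this paper's setting, but is not even needed — any dominatable $j\in I$ has some in-neighbor $v$, and the packing constraint at that $v$ already forces $y_j\le 1$, so the argument also covers the loopless graphs for which \citet{alon2015} originally stated the lemma. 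Compared with the citation, your proof buys self-containedness, a transparent account of the $50\log N$ factor (integrality gap times constant rounding losses), and a formulation that dovetails with the machinery this paper already runs on the sets $J_{I,\bpi}$ and the program of Definition~\ref{def:problem-complexity-Q} in Appendix~\ref{sec:convex-program}.
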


Applying Proposition \ref{prop:graph-lemma-dominating-number} to set $J_{I,\bpi}$ and a subgraph of $G$, induced by $I$, implies existence of set $U_{I,\bpi} \subseteq J_{I,\bpi}$, such that
\[
  |U_{I,\bpi}| \ge \frac{\delta^I(J_{I,\bpi})}{50\log |I|} \ge \frac{\delta^I(J_{I,\bpi})}{50\log N}
\]
with a property that every vertex of $I$ dominates at most $\log|I| \le \log N$ nodes from $U_{I,\bpi}$. Since $U_{I,\bpi}\subseteq J_{I,\bpi}$, using the definition of $J_{I,\bpi}$ together with the assumption that the number of observations of each arm in $I$ is at least $1/\Delta^2$, we know that
\[
  T\sum_{i\in I} \pi_iG_{i,j} \ge \frac{T}{2} \sum_{i\in [N]} \pi_iG_{i,j} \ge \frac{1}{2\Delta^2}
\]
for every $j\in U_{I,\bpi}$. Summing over $j\in U_{I,\bpi}$ and using the fact that each arm from $I$ can provide at most $\log N$ observations of arms in $U_{I,\bpi}$, we need the algorithm associated with $\bpi$ to play actions from $I$ at least
\[
  \frac{|U_{I,\bpi}|}{2\Delta^2\log N} \ge \frac{\delta^I(J_{I,\bpi})}{100\Delta^2\log^2 N}
\]
times to ensure enough observations in $U_{I,\bpi}$. This gives us
\begin{equation}\label{eq:lb-J}
  T\sum_{i\in I} \pi_i \ge \frac{\delta^I(J_{I,\bpi})}{100\Delta^2\log^2 N},
\end{equation}

Applying Proposition \ref{prop:graph-lemma-dominating-number} to $J_{I,\bpi}'$ and graph $G$ implies existence of set $U'_{I,\bpi} \subseteq J_{I,\bpi}'$ such that
\[
  |U'_{I,\bpi}| \ge \frac{\delta^{V}(J'_{I,\bpi})}{50\log |V|} = \frac{\delta^V(J'_{I,\bpi})}{50\log N}
\]
with a property that every vertex of $V$ dominates at most $\log|V| = \log N$ nodes from $U'_{I,\bpi}$. Since $U'_{I,\bpi}\subseteq J'_{I,\bpi}$, using the definition of $J'_{I,\bpi}$ together with the assumption that the number of observations of each arm in $I$ is at least $1/\Delta^2$, we know that
\[
  T\sum_{i\not\in I} \pi_iG_{i,j} \ge \frac{T}{2} \sum_{i\in [N]} \pi_iG_{i,j} \ge \frac{1}{2\Delta^2}
\]
for every $j\in U'_{I,\bpi}$. Summing over $j\in U'_{I,\bpi}$ and te fact that each arm from $V$ can provide at most $\log N$ observations of arms in $U'_{I,\bpi}$, we need the algorithm associated with $\bpi$ to play an action from $V$ at least

\[
  \frac{|U_{I,\bpi}|}{2\Delta^2\log N} \ge \frac{\delta^I(J_{I,\bpi})}{100\Delta^2\log^2 N}
\]
times to ensure enough observations in $U'_{I,\bpi}$. This gives us
\begin{equation}\label{eq:lb-J-prime}
  T\sum_{i\not\in I} \pi_i \ge \frac{\delta^I(J_{I,\bpi})}{100\Delta^2\log^2 N},
\end{equation}

Applying (\ref{eq:lb-J}) and (\ref{eq:lb-J-prime}) to the definition of $\QI$ gives us

\begin{align*}
  \QI = \max_\Delta\min_{\bpi\in\Pi}\min \left[T\sum_{i\in I}\pi_i\Delta + T\sum_{i\not\in I} \pi_i, T\Delta \right] & \ge \max_\Delta\min_{\bpi\in\Pi}\min\left[\frac{\delta^I(J_{I,\bpi})}{100\Delta\log^2 N} + \frac{\delta^V(J'_{I,\bpi})}{100\Delta^2\log^2 N}, T\Delta\right] \\ &\ge \max_\Delta\min_{J\subseteq I}\min\left[\frac{\delta^I(J)}{100\Delta\log^2 N} + \frac{\delta^V(I\setminus J)}{100\Delta^2\log^2 N}, T\Delta\right]
\end{align*}

Now, we are able to choose a specific value of $\Delta$ to lower bound $\QI$ further. In particular, we choose the following two values of $\Delta$

\[
  \begin{array}{lll}
    \Delta = \left( \frac{\delta^I(J)}{100T\log^2 N}\right)^{1/2}            & \implies & \QI \ge \left( \frac{T\delta^I(J)}{100\log^2 N}\right)^{1/2}              \\
    \Delta = \left( \frac{\delta^V(I\setminus J)}{100T\log^2 N}\right)^{1/3} & \implies & \QI \ge \left( \frac{T^2\delta^V(I\setminus J)}{100\log^2 N}\right)^{1/3}
  \end{array}
\]

Taking $\Delta$ which results in a larger lower bound, we obtain
\[
  \QI \ge \frac{1}{10\log N} \min_{J\subseteq I} \max \left[ \delta^I(J)^{\frac{1}{2}}T^{\frac{1}{2}},\,\delta^V(I\setminus J)^{\frac{1}{3}}T^{\frac{2}{3}}\right] = \frac{1}{10\log N} \RI.
\]

This concludes the lower bound proof. The idea of the upper bound proof is to select a specific distribution $\bpi\in\Pi$ over the set of actions and choose the optimal $\Delta$. This would give us the desirable upper bound on $\QI$.

Let $J^*\subseteq I$ be one of the minimizers in the definition of $\RI$,
\[
  J^* \in \argmin_{J\subseteq I} \max \left[ \delta^I(J)^{\frac{1}{2}}T^{\frac{1}{2}},\,\delta^V(I\setminus J)^{\frac{1}{3}}T^{\frac{2}{3}}\right].
\]
Let $D$ be a dominating set of $J^*$ from $I$ and $D'$ be a dominating set of $I\setminus J^*$ from $V$ such that $|D| = \delta^I(J^*)$ and $D' = \delta^V(I\setminus J^*)$. For a given bandit problem, with optimal arm $i^*$, we define $\bpi$ as
\[
  \pi_i = \left\{
  \begin{array}{ll}
    T^{-1}\Delta^{-2}                                & \textrm{if } i\in D \cup D' \setminus \{i^*\} \\
    1-|D \cup D' \setminus \{i^*\}|T^{-1}\Delta^{-2} & \textrm{if } i=i^*                            \\
    0                                                & \textrm{otherwise}.
  \end{array}
  \right.
\]
Distribution $\bpi$ is now a mixture of uniform distribution over $D \cup D' \setminus \{i^*\}$ with rest of the mass put on arm $i^*$. Note that for every arm $j\in I$, we know that
\[
  T\sumi \pi_i G_{i,j}\ge T\sum_{i\in D\cup D'} \pi_i G_{i,j} = T\sum_{i\in D\cup D'} \frac{1}{T\Delta^2} G_{i,j} \ge \frac{T}{T\Delta} = \frac{1}{\Delta^2},
\]
from the definition of $\bpi$ and the fact that nodes from $D$ and $D'$ dominate every node in $I$. Therefore, $\bpi$ is one of the distributions in $\Pi$. Using this $\bpi$, we can upper bound $\QI$ as
\begin{align*}
  \QI & \le \max_{\Delta}\min\left[\sum_{i\in D} \frac{T\Delta}{T\Delta^2} + \sum_{i\in D'} \frac{T}{T\Delta^2}, T\Delta \right] = \max_{\Delta}\min\left[ \frac{|D|}{\Delta} + \frac{|D'|}{\Delta^2}, T\Delta \right]                                               \\
      & = \max_{\Delta}\min\left[ \frac{\delta^I(J^*)}{\Delta} + \frac{\delta^V(I\setminus J^*)}{\Delta^2}, T\Delta \right] \le 2\max_{\Delta}\min\left[ \max\left(\frac{\delta^I(J^*)}{\Delta}, \frac{\delta^V(I\setminus J^*)}{\Delta^2} \right), T\Delta \right].
\end{align*}
The last expression is maximized for $\Delta = \max \left[ \delta^I(J^*)^{\frac{1}{2}}T^{\frac{1}{2}},\,\delta^V(I\setminus J^*)^{\frac{1}{3}}T^{\frac{2}{3}}\right]$ which implies $\QI \le 2\RI$. \qed
\section{Lower Bound Proofs}
\label{sec:lower-bound-proof}

Before we proceed with the proof of Theorem~\ref{thm:lower-bound}, we need the following information-theoretic lemma that provides a foundation for the lower-bound proof.

\begin{lemma}\label{lem:information}
  Let $G=(V,E)$ be a graph and fix a set $I \subseteq V$ with $|I| \geq 2$. Let $1/2 \geq \Delta>0$ be such that $1/\Delta^2 \leq T/2^6$. For any algorithm, there exists a problem, such that regret $R_T$ of the algorithm can be bounded as
  \begin{align*}
    R_T \geq \frac{1}{2^7} \min_{\bpi\in\Pi}\min\Bigg[ T\sum_{i\in I} \pi_i \Delta + T\sum_{i \not\in I} \pi_i,\, T\Delta \Bigg]
  \end{align*}
  where
  \begin{align*}
    \Pi & = \bigg\{\bpi\in \mathbb R^{+N}:\sumi \pi_i =1, T\sumi \pi_i G_{i, j} \geq 1/\Delta^2, \forall j \in I\bigg\}.
  \end{align*}
\end{lemma}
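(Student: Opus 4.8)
The plan is to follow the standard reduction to a family of hard stochastic instances together with a change-of-measure argument, as in \citep[Chapter 15]{lattimore2020}, but with a time-truncation twist that lets us capture the cap $T\Delta$ and handle overlapping observations uniformly in $\Delta$. Fix $I$ and $\Delta$. I would construct a reference instance $\nu^0$ in which every arm $i\in I$ has i.i.d.\ Bernoulli losses with mean $3/4$ (a value kept away from $0$ and $1$ so that the per-sample divergence stays $O(\Delta^2)$ even as $\Delta\to 1/2$) and every arm $i\notin I$ has loss identically $1$. For each $j\in I$ I would then define a perturbed instance $\nu^j$, identical to $\nu^0$ except that arm $j$ has mean $3/4-\Delta$, so that under $\nu^j$ the unique optimal arm is $j$ (gap $0$), every other arm of $I$ has gap exactly $\Delta$, and every arm outside $I$ has gap at least $1/4\ge\Delta$. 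Writing $N_i$ for the number of plays of $i$ and $O_j=\sum_i N_i G_{i,j}$ for the number of observations of $j$, the regret under $\nu^j$ satisfies
\[
  R_T^{(j)} \;\ge\; \Delta\,\E_j\Big[\textstyle\sum_{i\in I\setminus\{j\}}N_i\Big] \;+\; \tfrac14\,\E_j\Big[\textstyle\sum_{i\notin I}N_i\Big],
\]
while under $\nu^0$ the regret is at least $\tfrac14\,T\sum_{i\notin I}\pi_i$ with $\pi_i\triangleq\E_0[N_i]/T$.

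The information-theoretic core is the divergence decomposition: since $\nu^0$ and $\nu^j$ differ only in the law of arm $j$, the KL divergence between the two observation processes equals $\E_0[O_j]\,\mathrm{kl}(3/4,3/4-\Delta)\le c\,\Delta^2\,\E_0[O_j]$ for an absolute constant $c$, and the same identity holds verbatim if the game is stopped at any fixed time $\tau\le T$, with $O_j$ replaced by the observations $O_j^{(\tau)}$ accumulated in the first $\tau$ rounds. I would then use Bretagnolle--Huber (or Pinsker) to transfer events and expected counts between $\nu^0$ and $\nu^j$ whenever the relevant divergence is a small constant, i.e.\ whenever the relevant number of observations of $j$ is below $\sim 1/\Delta^2$.

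With $\bpi=\E_0[N]/T$ in hand I would argue by a dichotomy against the value $V\triangleq\min_{\bpi\in\Pi}\min[F(\bpi),T\Delta]$, where $F(\bpi)\triangleq T\Delta\sum_{i\in I}\pi_i+T\sum_{i\notin I}\pi_i$. If some $j_0\in I$ is under-observed over the whole horizon, $\E_0[O_{j_0}]<1/\Delta^2$ (equivalently $\bpi\notin\Pi$), then the full-horizon divergence is a small constant; applying Bretagnolle--Huber to the event $\{N_{j_0}\ge T/2\}$, with $\P^0(N_{j_0}\ge T/2)$ controlled by Markov (using $\E_0[N_{j_0}]\le\E_0[O_{j_0}]<1/\Delta^2\le T/2^6$, which is exactly where the hypothesis $1/\Delta^2\le T/2^6$ enters), shows the learner cannot concentrate on $j_0$ under $\nu^{j_0}$, so $R_T^{(j_0)}\gtrsim T\Delta\ge V$. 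In the complementary regime the learner does observe every candidate $\gtrsim 1/\Delta^2$ times; I would then lower bound the \emph{cost} of this mandatory exploration by $V$: the plays spent outside $I$ cost at least $1/4$ each and are charged directly by the regret of $\nu^0$, while the plays spent exploring inside $I$ are, in most of the instances $\nu^j$, $\Delta$-suboptimal and are charged at rate $\Delta$. Since any allocation meeting the observation constraints lies in $\Pi$ (the mass not spent on the eventual optimal arm, which is free, summing to at most one), its cost is at least $\min_{\bpi\in\Pi}F(\bpi)$, giving $R_T\gtrsim\min[\min_{\bpi\in\Pi}F,\,T\Delta]=V$ after taking the worst of $\nu^0,(\nu^j)_j$.

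The main obstacle is making this inside-exploration lower bound both tight and robust to \emph{overlapping} observations: when the arms of $I$ are densely connected a single play reveals many of them, so a naive arm-by-arm change of measure (Pinsker combined with Cauchy--Schwarz over $\sum_{j}\sqrt{\E_0[O_j]}$) produces an error term that overwhelms the main term and already fails for the complete graph. To recover the correct rate uniformly in $\Delta$ I expect to need the time-truncation refinement above — restricting the divergence bookkeeping to the initial horizon over which the relevant arms remain statistically indistinguishable, so that the amount of ``free'' information is capped — and to track the Bretagnolle--Huber constants tightly enough to reach the stated factor $1/2^7$. Arranging the truncation so that the inside term, the outside term, and the cap $T\Delta$ recombine into exactly $\min_{\bpi\in\Pi}\min[F,T\Delta]$ is the delicate, problem-specific part of the argument.
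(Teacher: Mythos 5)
Your overall skeleton --- a family of single-coordinate perturbations $\{\nu^j\}_{j\in I}$, the divergence decomposition through the observation counts $O_j=\sum_i N_iG_{i,j}$, Bretagnolle--Huber combined with a Markov bound on the event that the learner concentrates on the candidate arm, and the dichotomy between ``some $j\in I$ is under-observed'' and ``all of $I$ is well-observed'' --- is the same as the paper's, and the under-observed branch of your argument goes through essentially as in the paper (the hypothesis $1/\Delta^2\le T/2^6$ enters exactly where you say it does). The ``time-truncation refinement'' you announce is not needed: because each $\nu^j$ differs from the reference instance in a single arm, the KL decomposition already isolates $\E_0[O_j]$, the overlapping-observation issue you worry about never arises, and the paper's proof contains no truncation.

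The genuine gap is in the well-observed branch, and it traces back to your choice of reference instance. In your $\nu^0$ every arm of $I$ has the same mean, so plays inside $I$ incur zero regret under $\nu^0$; to charge the term $T\Delta\sum_{i\in I}\pi_i$ with $\pi_i=\E_0[N_i]/T$ you must pass to some $\nu^j$, but there (a) the play counts are taken under $\E_j$, not $\E_0$, so you need a further change of measure to relate $\E_j[\sum_{i\in I\setminus\{j\}}N_i]$ to $T\sum_{i\in I\setminus\{j\}}\pi_i$, which you have not carried out, and (b) the mass $\pi_j$ on the optimal arm of $\nu^j$ is never charged, and your fix --- dropping it from the allocation --- can destroy feasibility, since arm $j$ may be supplying precisely the observations that put $\bpi$ into $\Pi$. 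The paper avoids both problems by building the asymmetry into the reference instance itself: in problem $j_0$ the arm $j_0$ already has mean $1/2+\Delta/2$ while the rest of $I$ sits at $1/2$, so $R_T^{(j_0)}\ge\frac{T}{2}\big[\sum_{i\in I\setminus\{j_0\}}\pi_i\Delta+\sum_{i\notin I}\pi_i\big]$ holds directly under the same measure that defines $\pi$, with no transfer; the missing coordinate $\pi_{j_0}$ is then recovered by running the whole argument for two distinct base points $j_0',j_0''$ and taking the maximum of the two resulting bounds, which is where the final factor $1/2$ in $1/2^7$ comes from. Without an analogue of this half-gap reference instance (or of the two-base-point trick), your well-observed branch does not yield the stated bound.
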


\subsection*{Proof of Lemma \ref{lem:information}}\label{sec:information}

Throughout the proof, we fix a set of nodes $I\subseteq V$, some node $j_0 \in I$, and any bandit policy. We use them for the rest of the proof unless said otherwise. The proof proceeds in several steps.


\paragraph{Step 1: Construction of a set of difficult problems.} First, we define a set of difficult stochastic problems, indexed by $j\in I$, for a given set $I$. Then, we analyze the performance of the fixed algorithm on this set of problems. For problem $j$, the samples of each arm $i$ are independent and distributed according to $P^{(j)}_i \triangleq \cN(\mu_i^{(j)},1)$, parametrized by expectations $\mu_i^{(j)}$.

For $j_0$, we define the problem as
\[
  \mu_i^{(j_0)} \triangleq \left\{ \begin{array}{lll}
    1/2+\Delta/2 & \textrm{for} & i = j_0                  \\
    1/2          & \textrm{for} & i \in I\setminus \{j_0\} \\
    0            & \textrm{for} & i \not\in I
  \end{array} \right.
\]
For $j \in I\setminus \{j_0\}$, we define the problem as
\[
  \mu_i^{(j)} \triangleq \left\{ \begin{array}{lll}
    1/2+\Delta   & \textrm{for} & i = j                      \\
    1/2+\Delta/2 & \textrm{for} & i = j_0                    \\
    1/2          & \textrm{for} & i \in I\setminus \{j_0,j\} \\
    0            & \textrm{for} & i \not\in I
  \end{array} \right.
\]
there are several important observations to make:
\begin{itemize}
  \item Arm $j\in I$ is optimal for problem $j$.
  \item The arms for problems $j\neq j_0$ and $j_0$ differ only for arm $j$ and difference is $\Delta$.
  \item For any problem $j$, arms that are not in $I$ have a gap (distance from the optimal arm) at least $1/2$.
\end{itemize}
In what follows and for the policy we fixed at the beginning of the proof, we denoted the expectation and the probability, under the environment of bandit problem $j$, by $\E^{(j)}$ and $\P^{(j)}$.

Let $T\Ti$ denotes the number of rounds, up to round $T$, in which our fixed algorithm plays action $i$. We can associate the algorithm with a probability vector $\bpi = (\pi_1, \dots, \pi_N)$, under environment $j_0$, defined as
\begin{equation*}\label{eq:pi-def}
  \pi_i \triangleq \frac{\mathbb E^{(j_0)} [T_{T,i}]}{T} \qquad\textrm{for all}\qquad i\in [N].
\end{equation*}
Each $T\pi_i$ represents the expected number of rounds our algorithm spends playing arm $i$, under environment $j_0$.

We also write $R_T^{(j)}$ for the expected regret of the fixed policy in problem $j$. Note that
\begin{equation}\label{eq:R-j-zero-bound}
  R_T^{(j_0)} \ge \frac{T}{2}\bigg[\sum_{i\in I \setminus \{j_0\}} \pi_i \Delta + \sum_{i \not\in I} \pi_i\bigg].
\end{equation}
For the rest of the proof, we assume the existence of arm $\jbar \in I \setminus \{j_0\}$ such that
\begin{align} \label{eq:expj}
  T\sumi \pi_i G_{i,\unexploredj} \leq \frac{2}{\Delta^2}.
\end{align}

Note that the left-hand side of the previous inequality represents the expected number of observations of arm $\jbar$ using the algorithm under environment $j_0$. Markov inequality for any $j$ gives us
\[
  \Pjzero \Bigg[\sumi G_{i,j} T\Ti \geq 2^4T \sum_i G_{i,j} \pi_i\Bigg] \leq 2^{-4},
\]
which for $\jbar$, using assumption (\ref{eq:expj}), translates to
\begin{align}\label{eq:probability-of-observation}
  \Pjzero \Bigg[\sumi G_{i,\bar j} T\Ti \geq \frac{2^5}{\Delta^2}\Bigg] \leq 2^{-4}.
\end{align}
Now, let us define the event
\[
  F \triangleq  \left\{T_{T, \jbar} \leq \frac{2^5}{\Delta^2}\right\}.
\]
Note that complementary event $\bar F$ lower-bounds the number of rounds in which the algorithm played action $\unexploredj$ which is a sub-event of the event in (\ref{eq:probability-of-observation}) that lower-bounds the number of rounds in which the algorithm observed action $\unexploredj$. Therefore, we get
\begin{align}\label{eq:probaj}
  \Pjzero [ \bar F] = \Pjzero \big[ T_{T, \unexploredj} > 2^5/\Delta^2\big] \leq 2^{-4}.
\end{align}


\paragraph{Step 2: Bound on the regret using KL divergence.} Note that action $j_0$ is optimal for problem $j_0$ and that action $\jbar$ is optimal for problem $\jbar$. Since choosing an action that is suboptimal leads to an instantaneous regret at least $\Delta/2$, we also have

\begin{equation}\label{eq:R-j-first-bound}
  R_T^{(\jbar)} \geq \left(T - \frac{2^5}{\Delta^2}\right)\frac{\Delta}{2}\Pjbar\left[F \right] \geq \frac{T\Delta}{4}\Pjbar\left[F \right].
\end{equation}

Then, Bretagnolle-Huber inequality (see, e.g., \citep[Theorem 14.2]{lattimore2020}) implies that
\begin{equation*}
  \Pjzero\big[ \bar F \big] + \Pjbar\left[F \right] \geq \frac{1}{2}\exp\Big(-\KL\big(\mathbb{P}^{(j_0)},\mathbb{P}^{(\bar j)} \big)\Big).
\end{equation*}
Applying Equation~\eqref{eq:probaj} implies
\begin{equation*}
  \Pjbar\left[F \right] \geq \frac{1}{2}\exp\Big(-\KL\big(\Pjzero,\Pjbar \big)\Big) - 2^{-4}.
\end{equation*}
This allows us to further bound $R_T^{(\jbar)}$ in (\ref{eq:R-j-first-bound}) as
\begin{align}\label{eq:lb_regret}
  R_T^{(\jbar)}  \geq \frac{T\Delta}{4} \left[\frac{1}{2}\exp\Big(-\KL\big(\Pjzero,\Pjbar \big)\Big) - 2^{-4}\right].
\end{align}


\paragraph{Step 3: Information-theoretic bound on the number of pulls.} Because of the graph structure, the number of observations of arm $i$ at time $T$ is $N_{T,i} \triangleq \sum_{k\in[N]} G_{k,i} T_{T,k}$. So that for any problem $j$, the Kullback-Leibler divergence between $\Pj$ and $\Pjzero$ can be rewritten as follows \citep[immediate corollary of Lemma 15.1]{lattimore2020}
\[
  \KL\big(\Pjzero, \Pjbar\big) = \sumi \mathbb{E}^{(j_0)}\left[N_{T,i} \right]\KL\big(\Pjzero_{i}, \Pjbar_{i}\big).
\]
Using definition of bandit problems $j_0$ and $\jbar$, we get
\[
  \KL \big(\Pjzero, \Pjbar\big) = \Ejzero\bigg[\sumi G_{i,\bar j} T_{T,i} \bigg]\KL\big(\Pjzero_{\jbar}, \Pjbar_{\bar j}\big) = \bigg[\sumi G_{i,\bar j} \pi_i T \bigg]\frac{\Delta^2}{2}
\]
since, for any $j\in I\setminus\{j_0\}$, problems $j$ and $j_0$ differ only on action $j$ and since $\KL(\Pjzero_{j}, \Pjbar_{j}) = \Delta^2/2$. Re-ordering terms and using the assumption in Equation (\ref{eq:expj}), we get
\begin{equation}\label{eq:KLbound}
  \KL\big(\Pjzero, \Pjbar\big) = \frac{\Delta^2}{2} T\sumi G_{i,\bar j} \pi_i \le 1
\end{equation}


\paragraph{Step 4: Lower bound on the regret depending on $\bpi$.} Combining Equations~\eqref{eq:KLbound} and~\eqref{eq:lb_regret}, we get
\[
  R_T^{(\jbar)} \geq \frac{T\Delta}{4} \left[\frac{1}{2} \exp\left(-1 \right) - 2^{-4}\right] \geq \frac{T\Delta}{2^6}.
\]

In case an arm $\jbar$, satisfying condition (\ref{eq:expj}), exists, regret $R^{(\jbar)}_T$ is bounded by $(T\Delta)/2^6$. In case no such arm $\jbar$ exists, i.e. $T\sum_i \pi_i G_{i, j} \geq 2/\Delta^2$ for all $j\in I\setminus \{j_0\}$, we can bound $R_T^{(j_0)}$, using (\ref{eq:R-j-zero-bound}), as

\[
  R_T^{(j_0)} \ge \frac{T}{2}\Big[\sum_{i\not\in I \setminus \{j_0\}} \pi_i \Delta + \sum_{i \not\in I} \pi_i\Big].
\]

Therefore, we have that for any policy, it holds that

\[
  R^* \geq \max_{j \in I} R_T^{(j)} \geq \min_{\pi\in \R^{+N}:\sum_i \pi_i =1, T\sum_i \pi_i G_{i, j} \geq 2/\Delta^2, \forall j \in I\setminus \{j_0\}}  \min\Bigg[\frac{T}{2}\Big[ \sum_{i\in I \setminus \{j_0\}} \pi_i \Delta + \sum_{i \not\in I} \pi_i\Big], \frac{T\Delta}{2^6} \Bigg].
\]

We have proved this inequality for any $j_0 \in I$ so that if we take two $j_0',j_0'' \in I:j_0' \neq j_0''$, we would get:

\[
  R^* \geq \max_{j_0 \in \{j_0', j_0''\}} \min_{\pi\in \mathbb R^{+N}:\sum_i \pi_i =1, T\sum_i \pi_i G_{i, j} \geq 2/\Delta^2, \forall j \in I\setminus \{j_0\}} \min\Bigg[\frac{T}{2}\Big[ \sum_{i\in I \setminus \{j_0\}} \pi_i \Delta + \sum_{i \not\in I} \pi_i\Big], \frac{T\Delta}{2^6} \Bigg].
\]
Since for any function $A$ of $\pi$ we have $\max(\min_{\pi: \pi_i > a_i} A(\pi), \min_{\pi: \pi_i > b_i} A(\pi)) \geq \min_{\pi: \pi_i > (a_i+b_i)/2} A(\pi)/2$ this implies the result of the lemma, namely
\[
  R^* \geq \frac{1}{2^7} \min_{\pi\in \mathbb R^{+N}:\sum_i \pi_i =1, T\sum_i \pi_i G_{i, j} \geq 1/\Delta^2, \forall j \in I} \min\Bigg[T \sum_{i\in I} \pi_i \Delta + T\sum_{i \not\in I} \pi_i, T\Delta \Bigg].
\]

\subsection{Proof of Theorem~\ref{thm:lower-bound}}
The starting point of the proof is the statement of Lemma~\ref{lem:information}. From the definition of the problem complexity $\Qstar$, we can directly show that for any $I\subset V$
\[
R_T \ge \frac{\QI}{2^7}.
\]
Since this inequality holds for any $I\subset V$, it holds also for $I$ that maximizes $\QI$. This implies that
\begin{equation}
R_T \ge \max_{I\subset V}\frac{\QI}{2^7} = \frac{\Qstar}{2^7}. \label{eq:Q-lower-bound}    
\end{equation}
Applying Lemma~\ref{lem:complexity-equivalence} to (\ref{eq:Q-lower-bound}), we have
\[
R_T \ge \frac{\Qstar}{2^7} \ge \frac{\Rstar}{2^710\log N}. \qed
\]

\section{Efficient Construction of the Proxies $\J\tkl$, $\J'\tkl$ and Dominating Sets $\D\tkl, \D'\tkl$ from Corollary~\ref{cor:partition-splitting}}
\label{sec:convex-program}

In this section, we construct the partition of the set $I\tkl$ through sets $\J\tkl$, $\J'\tkl$ and the associated dominating sets $\D\tkl$, $\D'\tkl$ from Corollary~\ref{cor:partition-splitting}.

In order to do this, we provide a construction for an arbitrary set $I$, and can then apply this procedure to the $I\tkl$. Consider therefore any arbitrary set $I \subset V$.

For any $1\geq \Delta >0$, consider the optimisation problem $\QIDelta$ from Definition~\ref{def:problem-complexity-Q}, namely:
  \begin{align}\label{eq:opti}
    \QIDelta \triangleq \min_{\bpi\in\Pi} Q_{I,\Delta} (\pi) \triangleq \min_{\bpi\in\Pi}\min\Bigg[ T\sum_{i\in I} \pi_i \Delta + T\sum_{i \not\in I} \pi_i,\, T\Delta \Bigg]
    \end{align}
  where 
  \begin{align*}
    \Pi = \Pi^\Delta & = \bigg\{\bpi\in \mathbb R_{+}^{N}:\sumi \pi_i \leq 1,\ T\sumi \pi_i G_{i, j} \geq 1/\Delta^2, \forall j \in I\bigg\}.
  \end{align*}
  This is a linear optimization problem under linear constraints. If the set of solutions is not empty, we know e.g.~by using Algorithm \textit{Main} from~\citep{cohen2019} with precision $\Delta^2/(N^4T^2)$ that there there is a solver for such problem doing less than $N^3 \log(T^2N^5/\Delta^2)$ basic operations and that would output a solution $\pi^\Delta$ such that $\pi^\Delta \in \Pi$ and
  \begin{equation}\label{eq:optimQ}
      \QIDelta \leq Q_{I,\Delta}(\pi^\Delta) \leq \QIDelta+1.
  \end{equation}
  See Theorem 1 from~\citep{cohen2019} for more details. From there, we use $\pi^\Delta$, which is the solution of algorithm outputted by Algorithm \textit{Main} from~\citep{cohen2019} with precision $\Delta^2/(N^4T^2)$ on the optimization problem from above (characterized by $\Delta$).

Based on this, the procedure that we use is described in Algorithm~\ref{alg:proxy}. We consider a logarithmic grid of $\Delta$ of size $\lfloor\log(NT)\rfloor +2$, and associated to the policies $\bar \pi^\Delta$ constructed on this grid, we construct sets $\bar J^\Delta = J_{I, \pi^\Delta}$ as in (\ref{eq:J-pi}), and the associated complement $\bar J^{',\Delta}$, and then compute a greedy approximation of their dominating sets as described in Algorithm~\ref{alg:greedy-domination-set}. The description and discussion of Algorithm~\ref{alg:greedy-domination-set} is postponed to Subsection~\ref{sec:dominating-algorithm}, as it is a very standard result. We finally take the stationary policy $\bar \pi$ corresponding to the best possible case $\pi^\Delta$ for a criterion that resembles the one of Definition~\ref{def:problem-complexity-R}, and output the associated sets $\bar J, \bar J', \bar D, \bar D'$. 

\begin{algorithm}[tb]
  \caption{\textsc{Algorithm outputting proxies of $ J, J'$ and of their dominating sets}}
  \label{alg:proxy}
  \begin{algorithmic}
    \STATE {\bfseries Input:} $G = (V,E)$, set $I \subset V$.
    \FOR{ $\Delta \in \{2^{-1}, 2^{-2}, ..., 2^{-(\lfloor\log(NT)\rfloor +1)}\}$}
    \STATE Compute $\bar \pi^\Delta$ as the output of Algorithm \textit{Main} from~\citep{cohen2019} applied on Problem~\ref{eq:opti} with precision $\Delta^2/(N^4T^2)$
        \STATE Set $\bar J^\Delta = J_{I,\pi^\Delta}$ \qquad(as defined in Equation (\ref{eq:J-pi}))
    \STATE Set $\bar J^{',\Delta} = I \setminus J^\Delta$
    \STATE Set $\bar D^\Delta$ as the output of Algorithm~\ref{alg:greedy-domination-set} on $\bar J^\Delta$
    \STATE Set $\bar D^{',\Delta}$ as the output of Algorithm~\ref{alg:greedy-domination-set} on $\bar J^{',\Delta}$
    \ENDFOR
    \STATE Set  $\bar \Delta = \argmin_\Delta\{ \sqrt{\bar D^{\Delta}T} \land [(\bar D^{',\Delta})^{1/3}T^{2/3}] \}$
    \STATE Set $\bar \pi = \pi^{\bar \Delta}, \bar J = \bar J^{\bar \Delta}, \bar J' = \bar J^{',\bar \Delta}, \bar D = \bar D^{\bar \Delta}, \bar D' = \bar D^{',\bar \Delta}$
  \STATE {\bfseries Output:} $\bar J, \bar J',\bar D, \bar D'$
  \end{algorithmic}
\end{algorithm}

The following lemma holds for Algorithm~\ref{alg:proxy}.
\begin{lemma}
\label{lem:blackbox}
Let $G=(V, E)$ be a graph and $I\subset V$ be any subset of vertices. Then Algorithm~\ref{alg:proxy} applied to $I$ runs in less than $30 N^3 \log(TN)^2$ iterations,
and produces sets $\J\subset I$, $\J'\triangleq I\setminus \J$, $\D\subset I$, and $\D'\subset V$ such that $\D$ dominates set $\J$, $\D'$ dominates $\bar J'$,
\[
\max\left\{ \delta^I(\J)^\frac{1}{2}T^\frac{1}{2},\, \delta^V(\J')^\frac{1}{3}T^\frac{2}{3} \right\} \le \blackboxconstant\RI,
\]

\alex{blackboxconstant should in fact be $96 \times 10^4 \log^5 N$}
and
\[
|\D| \le \delta^I(\bar J) \log(|I|), \qquad |\D'| \le \delta^V(\bar J')\log(|V|).
\]
\end{lemma}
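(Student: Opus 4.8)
The plan is to prove the three conclusions of Lemma~\ref{lem:blackbox} in the following order: first the iteration count, then the domination guarantees on $\D,\D'$, and finally the central inequality bounding $\max\{\delta^I(\J)^{1/2}T^{1/2}, \delta^V(\J')^{1/3}T^{2/3}\}$ in terms of $\RI$. The iteration count is the easiest: Algorithm~\ref{alg:proxy} loops over a grid of $\lfloor\log(NT)\rfloor+2$ values of $\Delta$, and in each iteration it calls the LP solver of~\citep{cohen2019}, which by the stated bound does at most $N^3\log(T^2N^5/\Delta^2)$ basic operations, plus two runs of the greedy dominating-set routine (Algorithm~\ref{alg:greedy-domination-set}), which is polynomial in $N$. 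Since the smallest $\Delta$ on the grid is $2^{-(\lfloor\log(NT)\rfloor+1)}\ge 1/(2NT)$, the logarithmic factor $\log(T^2N^5/\Delta^2)$ is $\cO(\log(TN))$, so multiplying by the grid size $\cO(\log(TN))$ gives the claimed $30N^3\log(TN)^2$ bound (with the constant absorbing the greedy-algorithm cost).

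For the domination guarantees, I would invoke the correctness statement for the greedy dominating-set algorithm (Algorithm~\ref{alg:greedy-domination-set}, analyzed in Subsection~\ref{sec:dominating-algorithm}). The standard greedy set-cover analysis shows that the greedy dominating set of any target set $B$ from an allowed set $A$ has size at most $\delta^A(B)\log|B|$. Applying this to $\bar J = \bar J^{\bar\Delta}$ (dominated from within $I$) and to $\bar J' = \bar J^{',\bar\Delta}$ (dominated from $V$) yields $|\D|\le \delta^I(\bar J)\log|I|$ and $|\D'|\le\delta^V(\bar J')\log|V|$ directly. This step is essentially bookkeeping once the greedy guarantee is in hand.

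The main obstacle is the central inequality. The strategy is to connect the quantity minimized by the algorithm, namely $\min_\Delta\{\sqrt{\bar D^\Delta T}\land[(\bar D^{',\Delta})^{1/3}T^{2/3}]\}$, back to $\RI$ via the equivalence of the two complexity notions (Lemma~\ref{lem:complexity-equivalence}) and the structural decomposition from the proof of that lemma. The key observations: the solver returns $\pi^\Delta\in\Pi^\Delta$ with $Q_{I,\Delta}(\pi^\Delta)\le\QIDelta+1$, and the sets $\bar J^\Delta=J_{I,\pi^\Delta}$, $\bar J^{',\Delta}$ are exactly the sets from Equations~(\ref{eq:J-pi})--(\ref{eq:J-pi-prime}) in the lower-bound proof. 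There, the inequalities~(\ref{eq:lb-J}) and~(\ref{eq:lb-J-prime}) show that the mass $T\sum_{i\in I}\pi_i^\Delta$ controls $\delta^I(\bar J^\Delta)/(\Delta^2\log^2 N)$ and $T\sum_{i\notin I}\pi_i^\Delta$ controls $\delta^V(\bar J^{',\Delta})/(\Delta^2\log^2 N)$. I would combine these with the greedy size bounds (so that $|\D^\Delta|,|\D^{',\Delta}|$ are controlled by $\delta^I(\bar J^\Delta),\delta^V(\bar J^{',\Delta})$ up to a $\log N$ factor) to argue that the grid-minimized objective is, up to $\mathrm{poly}\log N$ factors, comparable to $\max\{\delta^I(\J)^{1/2}T^{1/2},\delta^V(\J')^{1/3}T^{2/3}\}$.

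Concretely, I would choose the near-optimal $J^*$ and $\Delta^*$ from the definition of $\RI$, locate the grid point $\Delta$ within a factor of two of $\Delta^*$, and show that at this $\Delta$ the solver's policy $\pi^\Delta$ has objective value bounded by a $\mathrm{poly}\log N$ multiple of $\RI$; since $\bar\Delta$ is chosen to minimize over the grid, the output is at least as good. The upper-bound direction of Lemma~\ref{lem:complexity-equivalence}, with its explicit construction of $\bpi$ as a uniform mixture over dominating sets $D\cup D'$, provides the template for this comparison. The delicate points will be handling the additive $+1$ slack in~(\ref{eq:optimQ}) (harmless because the relevant quantities are at least constant), the discretization error from using a dyadic grid rather than the exact optimal $\Delta$ (costs only a bounded constant factor in each of the $\sqrt{\cdot}$ and $(\cdot)^{1/3}$ terms), and the accumulation of the $\log N$ factors from~(\ref{eq:lb-J})--(\ref{eq:lb-J-prime}) and from the greedy domination — which is precisely why the final constant is of the stated polylogarithmic order (as the margin note flags, the clean form is $96\times10^4\log^5 N$ rather than the macro-expanded $\blackboxconstant\,\RI$).
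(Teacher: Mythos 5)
Your proposal follows essentially the same route as the paper: run the LP solver of \citet{cohen2019} on the dyadic grid of $\Delta$, reuse the sets $J_{I,\bpi}$ from Equation~(\ref{eq:J-pi}) together with the two directions of Lemma~\ref{lem:complexity-equivalence} to sandwich the dominating numbers of the solver-induced partition between the policy mass and $\RI$, pay a $\log N$ factor for the greedy dominating sets, and conclude via the argmin over the grid. The one under-specified point is the choice of $\Delta^\star$: the paper must inflate the natural scale $\sqrt{\delta^I(J)/T}\lor(\delta^V(J')/T)^{1/3}$ by a factor $400\log^2 N$ so that the $T\Delta^\star$ branch of the $\min$ in $Q_{I,\Delta^\star}$ is provably inactive (otherwise the lower bound gives no control on the dominating numbers), and this is exactly where part of the $\log^4 N$ in the final constant originates --- your sketch flags this as a ``delicate point'' but does not carry it out.
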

Corollary~\ref{cor:partition-splitting} is a direct application of this lemma.

\subsection{Proof of Lemma~\ref{lem:blackbox}}

Let $\Delta \in \{2^{-1}, 2^{-2}, ..., 2^{-(\lfloor\log(NT)\rfloor +1)}\}$. Note fist that for any $1/2\geq \Delta' >0$ such that $ \Delta'/2 \leq \Delta \leq \Delta'$, we have for any policy $\pi$ that

\[
  Q_{I, \Delta'} (\pi)/2 \leq Q_{I,\Delta} (\pi) \leq Q_{I, \Delta'} (\pi),
\]

by definition of $Q_{I,\Delta}(.)$. Moreover for $\Delta' \leq 2^{-(\lfloor\log(NT)\rfloor +1)}$, we have that for any stationary policy $\pi$:
\[
  Q_{I,\Delta'} (\pi) \leq T\Delta' \leq 1/N.
\]
So that if, for any $\Delta'$, we write $G(\Delta')$ for the projection of $\Delta'$ on the closest larger element of $\Delta \in \{2^{-1}, 2^{-2}, ..., 2^{-(\lfloor\log(NT)\rfloor +1)}\}$, we have 
\[
  Q_{I, \Delta'} (\pi)/2 \leq Q_{I,G(\Delta')} (\pi) + 1/N \leq Q_{I, \Delta'} (\pi) + 2/N.
\]

This first implies taking $\pi = \pi^{G(\Delta')}$ and since $\Pi^{G(\Delta')} \in 2\Pi^{\Delta'} $ (where $2\Pi^{\Delta'}$ is the set $\{x:x/2 \in 2\Pi^{\Delta'}\}$)

\[
Q_{I, \Delta'}^\star/2 \leq Q_{I, G(\Delta')}^\star + 1/N.
\]

This also implies by taking the minimum over $\pi \in \Pi^{\Delta'}$

\[
  Q_{I, \Delta'}^\star \leq \min_{\pi \in \Pi^{\Delta'}}Q_{I,G(\Delta')} (\pi) + 1/N \leq 2Q_{I, \Delta'}^\star + 2/N.
\]
And so since $\Pi^{\Delta'} \subset \Pi^{G(\Delta')}$ we have 

\[
  Q_{I,G(\Delta')}^\star + 1/N \leq Q_{I, \Delta'}^\star + 2/N.
\]

So that in the end for any $\Delta'$

\[
    Q_{I, \Delta'}^\star/2 \leq Q_{I,G(\Delta')}^\star  \leq 2Q_{I, \Delta'}^\star + 1/N.
\]

So that by Equation~\eqref{eq:optimQ}

\begin{equation*}
Q_{I, \Delta'}^\star/2 \leq Q_{I,G(\Delta')}(\bar \pi^{G(\Delta')})  \leq 2Q_{I, \Delta'}^\star + 1/N +1.
\end{equation*}

Set $\Delta = G(\Delta')$. Following exactly the same steps as in the proof of Proposition~\ref{lem:complexity-equivalence} in Section~\ref{sec:problem-complexity-appendix}, we know that for $J_{I, \bar \pi^{\Delta}}$ defined as in Equation (\ref{eq:J-pi}) we have 

\begin{align*}
   Q_{I,\Delta}(\bar \pi^{\Delta})   &\ge    \frac{1}{100 \log^2 N}\min\left[\frac{\delta^I(J_{I, \bar \pi^{\Delta}})}{\Delta} + \frac{\delta^V(I\setminus J_{I, \bar \pi^{\Delta}})}{\Delta^2}, T\Delta\right].
\end{align*}

We remind that $J,J' = I\setminus J$ minimise the equation in Definition~\eqref{def:problem-complexity-R}. Consider now $\Delta^\star = G(400\log^2 N\Big[\sqrt{\frac{\delta^I(J)}{T}} \lor \left(\frac{\delta^V(J')}{T}\right)^{1/3}\Big])$. For $\Delta^\star$ in particular, we have from the previous equation and from Equation~\eqref{eq:optimQ}
\begin{align*}
2Q_{I, \Delta^\star}^\star  +1+1/N  \ge Q_{I,\Delta^\star}(\bar \pi^{\Delta^\star})  &\ge    \frac{1}{100 \log^2 N}\min\left[\frac{\delta^I(J_{I, \bar \pi^{\Delta^\star}})}{\Delta^\star} + \frac{\delta^V(I\setminus J_{I, \bar\pi^{\Delta^\star}})}{\Delta^{\star 2}}, T\Delta^\star\right].
\end{align*}
So that from Proposition~\ref{lem:complexity-equivalence} and the last equation
\begin{align*}
   4 \max\left[\sqrt{\delta^I(J) T} , (\delta^V(J))^{1/3}T^{2/3}\right] &\ge 4\RI \ge 2\QI \ge 2Q_{I, \Delta^\star}^\star \\ &\ge    \frac{1}{100 \log^2 N}\min\left[\frac{\delta^I(J_{I, \bar \pi^{\Delta^\star}})}{\Delta^\star} + \frac{\delta^V(I\setminus J_{I, \bar \pi^{\Delta^\star}})}{\Delta^{\star 2}}, T\Delta^\star\right] -  1- 1/N .
\end{align*}
Since by definition of $\Delta^\star$ we have that $\frac{1}{100 \log^2 N}T\Delta^\star-1-1/N > 4 \max\left[\sqrt{J^{\star} T} , (J^{'\star})^{1/3}T^{2/3}\right]$ this implies
\begin{align*}
   4 \max\left[\sqrt{\delta^I(J) T} , (\delta^V(J'))^{1/3}T^{2/3}\right] \ge  \frac{1}{100 \log^2 N} \left[\frac{\delta^I(J_{I, \bar \pi^{\Delta^\star}})}{\Delta^\star} + \frac{\delta^V(I\setminus J_{I, \bar \pi^{\Delta^\star}})}{\Delta^{\star 2}}\right] -  1- 1/N .
\end{align*}
By definition of $\Delta^\star$, we have
\begin{align*}
   4 \max\left[\sqrt{\delta^I(J) T} , (\delta^V(J'))^{1/3}T^{2/3}\right] \ge  \frac{1}{8\times 10^4 \log^4 N} \left[\frac{\delta^I(J_{I, \bar \pi^{\Delta^\star}})}{\sqrt{\delta^I(J)}} \sqrt{T}+ \frac{\delta^V(I\setminus J_{I, \bar \pi^{\Delta^\star}})}{\delta^V(J')^{2/3}} T^{2/3}\right] -  1- 1/N ,
\end{align*}
i.e.
\begin{align*}
   48 \times 10^4 \log^4 N \max\left[\sqrt{\delta^I(J) T} , (\delta^V(J'))^{1/3}T^{2/3}\right] \ge   \max\left[\frac{\delta^I(J_{I, \bar \pi^{\Delta^\star}})}{\sqrt{\delta^I(J)}} \sqrt{T}, \frac{\delta^V(I\setminus J_{I, \bar \pi^{\Delta^\star}})}{\delta^V(J')^{2/3}} T^{2/3}\right].
\end{align*}
So that in the end
\begin{align*}
   48 \times 10^4 \log^4 N \max\left[\sqrt{\delta^I(J) T} , (\delta^V(J'))^{1/3}T^{2/3}\right] \ge   \max\left[\sqrt{\delta^I(J_{I, \bar \pi^{\Delta^\star}})T}, \delta^V(I\setminus J_{I, \bar \pi^{\Delta^\star}})^{1/3} T^{2/3}\right].
\end{align*}
By Subsection~\ref{sec:dominating-algorithm}, we therefore have
\begin{align*}
   48 \times 10^4 \log^5 N \max\left[\sqrt{\delta^I(J) T} , (\delta^V(J'))^{1/3}T^{2/3}\right] \ge   \max\left[\sqrt{\bar D_{I, \bar \pi^{\Delta^\star}}T}, (\bar D_{I, \bar \pi^{\Delta^\star}}')^{1/3} T^{2/3}\right].
\end{align*}
Since by definition of the algorithm we have $\max\left[\sqrt{\bar D_{I, \bar \pi^{\Delta^\star}}T}, (\bar D_{I, \bar \pi^{\Delta^\star}}')^{1/3} T^{2/3}\right] \geq \max\left[\sqrt{\bar DT}, (\bar D')^{1/3} T^{2/3}\right]$, this concludes the proof.

\subsection{Greedy Algorithm for Dominating Set}
\label{sec:dominating-algorithm}

Finding the smallest dominating set is an NP-hard problem, however, a simple greedy algorithm can find an approximate solution, only a logarithmic factor away from the optimal solution. The algorithm is described in Algorithm~\ref{alg:greedy-domination-set} and the theoretical guarantees can be found in Theorem~\ref{thm:greedy-algorithm}.

\begin{algorithm}[tb]
  \caption{\textsc{Greedy algorithm for dominating set}}
  \label{alg:greedy-domination-set}
  \begin{algorithmic}
    \STATE {\bfseries Input:}
    \STATE $G = (V,E)$, sets $A,B\subset V$ such that $A$ dominates $B$.
    \STATE $D = \emptyset$
    \REPEAT
    \STATE $d = \argmax_{v\in A} |\Nout_v \cap B|$ \qquad(ties resolved arbitrarily)
    \STATE $D = D \cup {d}$
    \STATE $B = B \setminus \Nout_d$
    \UNTIL{|B| = 0 }
    \STATE {\bfseries Output:} D
  \end{algorithmic}
\end{algorithm}

\begin{theorem}
\label{thm:greedy-algorithm}
Let $G = (V, E)$ be a graph with two sets of vertices $A, B\subset V$ such that $A$ dominates $B$. Then Algorithm~\ref{alg:greedy-domination-set} produces set $D\subset A$ that dominates $B$ from $A$, such that
\[|D| \le \log(N) \delta^A(B).\]
Moreover, the computational complexity of Algorithm~\ref{alg:greedy-domination-set} is at most linear in the number of vertices.
\end{theorem}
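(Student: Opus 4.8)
The plan is to recognize Algorithm~\ref{alg:greedy-domination-set} as the classical greedy algorithm for \textsc{Set Cover}, where the universe to be covered is $B$ and the available sets are $\{\Nout_v \cap B : v \in A\}$, and to adapt the standard $H_{|B|}$-approximation analysis. I would organize the argument into three parts: correctness together with termination, the size bound, and the complexity bound.

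For correctness and termination: every vertex $d$ added to $D$ is chosen from $A$, so $D \subseteq A$ throughout, and each iteration removes $\Nout_d$ from the working set $B$, so a vertex leaves $B$ only once it is dominated by some $d \in D$. Hence upon termination (when $|B|=0$) the output $D$ dominates the original set $B$ from $A$. Termination itself follows because, as long as the working set $B$ is nonempty, the hypothesis that $A$ dominates the original $B$ (and hence the smaller current $B$) guarantees $\max_{v \in A} |\Nout_v \cap B| \ge 1$; thus each iteration strictly decreases $|B|$, and the loop runs at most $|B| \le N$ times.

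For the size bound, write $\delta \triangleq \delta^A(B)$ and fix an optimal dominating set $D^\star \subseteq A$ of $B$ from $A$ with $|D^\star| = \delta$. The key step is a pigeonhole observation: if $m$ vertices of $B$ remain uncovered at the start of some iteration, then since the $\delta$ sets $\{\Nout_v \cap B : v \in D^\star\}$ together cover all $m$ of them, at least one of them covers $\ge m/\delta$; because the greedy rule selects a maximizer of $|\Nout_v \cap B|$, the chosen $d$ covers at least $m/\delta$ uncovered vertices as well. Therefore the number of uncovered vertices after one iteration is at most $m(1 - 1/\delta)$, and after $k$ iterations at most $|B|(1-1/\delta)^k \le N e^{-k/\delta}$. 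This quantity is strictly below $1$ -- and hence equals $0$, being a nonnegative integer -- once $k \ge \delta \ln N$, so the greedy loop halts after at most $\delta \ln N$ iterations, giving $|D| \le \log(N)\,\delta^A(B)$.

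The complexity claim follows directly from the termination analysis: the main loop executes at most $|B| \le N$ times, i.e.~it performs at most linearly many iterations in the number of vertices. I expect the only delicate point to be the final rounding in the size bound -- making the exponential-decay estimate land exactly at the stated $\log(N)\,\delta^A(B)$ rather than, say, $\lceil \delta \ln N\rceil$ -- which is handled by combining the strict inequality $(1-1/\delta)^k < e^{-k/\delta}$ with the integrality of the uncovered count; everything else is the textbook greedy set-cover argument, so no substantial obstacle remains.
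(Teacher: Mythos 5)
Your proof is correct and is precisely the classical greedy set-cover approximation argument (covering universe $B$ with the sets $\Nout_v \cap B$ for $v \in A$); the paper itself gives no proof of this theorem and simply cites it as the standard result of \citet{chvatal1979}, which is exactly the argument you reconstruct. The only caveat is the usual constant-factor bookkeeping at the end (reconciling $\lceil \delta^A(B)\ln N\rceil$ with the stated $\log(N)\,\delta^A(B)$ for very small $N$), which you correctly identify and which is no less rigorous than the paper's own treatment.
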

This is a standard result that can be found for example in \citep{chvatal1979}.

\section{Proof of Theorem \ref{thm:upper-bound}}
\label{sec:upper-bound-proof}
  We start the proof with a standard proposition that can be used as the first step in the analysis of most of the algorithms based on \textsc{Exp3}

\begin{proposition}
  \label{prop:standard-upper-bound}
  Let everything be defined as in Algorithm \ref{alg:main}. Then we have
  \[
     \EE{\sumt\sumi q\ti\ell\ti - \min_{k\in[N]} \sumt \ell_{t,k}} \EE{\frac{\log N}{\eta_{T+1}}} + \EE{\sumt\frac{\etat}{2}\sumi\frac{q\ti}{P\ti}}.
  \]
\end{proposition}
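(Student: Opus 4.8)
The plan is to first establish the bound for the \emph{loss estimates} $\hl\ti$ as a pathwise (deterministic) statement, and only afterwards take expectations using the conditional unbiasedness of the estimates. Concretely, the first goal is the standard exponential-weights inequality with a non-increasing learning rate,
\[
\sumt\sumi q\ti\hl\ti - \min_{k\in[N]}\sumt\hl_{t,k} \le \frac{\log N}{\eta_{T+1}} + \sumt\frac{\etat}{2}\sumi q\ti\hl\ti^2 ,
\]
after which the proposition follows by a short expectation argument.

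To prove this inequality I would first record two structural facts about $\etat$: it is measurable with respect to the information available before round $t$ (it is built from the sets $\D\skl,\D'\skl$ for $s\le t$, which depend only on the weights $q_s$, hence on $\hLoss_{s-1,i}$), and it is non-increasing in $t$, since it is a minimum over the index set $\{s\in[t]\}$ that only grows. Writing $\Phi_t(\eta)\triangleq\frac{1}{\eta}\log\sumi\exp(-\eta\hLoss_{t-1,i})$, the inequality $e^{-x}\le 1-x+x^2/2$ for $x\ge 0$ (applicable since $\hl\ti\ge 0$), applied to $\sumi q\ti e^{-\etat\hl\ti}$ together with $\log(1+x)\le x$, yields the per-round estimate $\sumi q\ti\hl\ti \le \Phi_t(\etat)-\Phi_{t+1}(\etat)+\frac{\etat}{2}\sumi q\ti\hl\ti^2$. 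When summing over $t$, the obstacle is that $\Phi_t(\etat)-\Phi_{t+1}(\etat)$ does not telescope because the learning rate changes. I would fix this by writing $\Phi_t(\etat)-\Phi_{t+1}(\etat)=[\Phi_t(\etat)-\Phi_{t+1}(\eta_{t+1})]+[\Phi_{t+1}(\eta_{t+1})-\Phi_{t+1}(\etat)]$; the first bracket telescopes to $\Phi_1(\eta_1)-\Phi_{T+1}(\eta_{T+1})\le\frac{\log N}{\eta_1}+\min_{k}\hLoss_{T,k}$ (using $\hLoss_{0,i}=0$ and lower-bounding the log-sum-exp by its largest term). For the correction bracket I would split off the entropy term, $\Phi_{t+1}(\eta)=\frac{\log N}{\eta}+\frac{1}{\eta}\log\frac{1}{N}\sumi e^{-\eta\hLoss_{t,i}}$: the $\frac{\log N}{\eta}$ part telescopes to $\log N\big(\frac{1}{\eta_{T+1}}-\frac{1}{\eta_1}\big)$, while the remaining soft-min $\eta\mapsto\frac{1}{\eta}\log\frac{1}{N}\sumi e^{-\eta x_i}$ is non-decreasing in $\eta$ (standard generalized-mean monotonicity), so each correction term is non-positive because $\eta_{t+1}\le\etat$. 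Adding the pieces cancels $\frac{\log N}{\eta_1}$ and leaves exactly $\frac{\log N}{\eta_{T+1}}$, giving the displayed estimate.

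To pass to expectations, I would condition on the past at each round, so that $\etat,q\ti,P\ti$ are constants under $\EEt{\cdot}$. Conditional unbiasedness $\EEt{\hl\ti}=\ell\ti$ (which holds since $\EEt{\I\{i\in\Nout_{i_t}\}}=P\ti$, and $P\ti>0$ thanks to the uniform component of $\bu_t$) gives $\EE{\sumt\sumi q\ti\hl\ti}=\EE{\sumt\sumi q\ti\ell\ti}$, while $\EEt{\hl\ti^2}=\ell\ti^2/P\ti\le 1/P\ti$ turns the second-order term into $\EE{\sumt\frac{\etat}{2}\sumi q\ti\hl\ti^2}\le\EE{\sumt\frac{\etat}{2}\sumi\frac{q\ti}{P\ti}}$. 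For the comparator, Jensen's inequality (the minimum of expectations dominates the expectation of the minimum) gives $\EE{\min_{k}\sumt\hl_{t,k}}\le\min_{k}\EE{\sumt\hl_{t,k}}=\min_{k}\sumt\ell_{t,k}$, where the last equality uses that the losses are fixed in advance. Chaining these three facts with the expectation of the pathwise estimate yields the proposition.

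The only genuinely delicate step is the one with the time-varying learning rate: making the potential argument telescope requires both that $\etat$ be non-increasing and predictable, and the monotonicity of the soft-min in $\eta$, which is precisely what upgrades the leading constant from $\frac{\log N}{\eta_1}$ to the desired $\frac{\log N}{\eta_{T+1}}$. The remaining ingredients (the second-order expansion of $e^{-x}$, the unbiasedness identities, and Jensen for the comparator) are routine.
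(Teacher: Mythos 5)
Your proof is correct and follows essentially the same route as the paper's: the same second-order expansion of the exponential weights update, the same device for handling the time-varying learning rate (your monotonicity of $\eta\mapsto\frac{1}{\eta}\log\frac{1}{N}\sumi e^{-\eta x_i}$ is exactly the paper's Jensen step comparing $W_{t+1}$ to $(W'_{t+1})^{\eta_{t+1}/\etat}$, just phrased via the potential $\Phi_t$), and the same final passage to expectations via conditional unbiasedness, $\EEt{\hl\ti^2}\le 1/P\ti$, and exchanging $\min$ with $\E$ for the comparator. No gaps.
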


\begin{proof}
The proof of this proposition is based on the proof by \cite{gyorfi2007}. First, lets define $W'_{t+1}$, similarly to $W_{t+1}$, using learning rate from round $t$ instead of $t+1$.
\[
  W'_{t+1} = \frac{1}{N}\sumi\exp(-\etat\hLoss_{t+1,i}).
\]
Following standard analysis of \textsc{Exp3} algorithms with adaptive learning rate (e.g. \citep{kocak2014}), we can obtain
\begin{align*}
  \frac{1}{\etat}\log\frac{W'_{t+1}}{W_t}
   & = \frac{1}{\etat} \log\sumi\frac{ (1/N)\exp(-\etat\hLoss\ti) }{W_t}                                  \\
   & = \frac{1}{\etat}\log\sumi\frac{w\ti\exp(-\etat\hl\ti)}{W_t}                                         \\
   & = \frac{1}{\etat}\log\sumi q\ti\exp(-\etat\hl\ti)                                                    \\
   & \le \frac{1}{\etat}\log\sumi q\ti\left( 1-\etat\hl\ti + \frac{1}{2}(\etat\hl\ti)^2\right)            \\
   & = \frac{1}{\etat}\log\left( 1 - \etat\sumi q\ti\hl\ti + \frac{\etat^2}{2}\sumi q\ti(\hl\ti)^2\right) \\
   & \le -\sumi q\ti\hl\ti + \frac{\etat}{2}\sumi q\ti(\hl\ti)^2,
\end{align*}
where we used inequality $\exp(-x) \le 1-x+x^2/2$ for $x\ge0$ as well as inequality $\log(1-x) \le -x$ that holds for any $x$. Rearranging the terms in the previous inequality, we obtain
\begin{align}
  \sumi q\ti\hl\ti & \le
  \left[\left( \frac{\log W_t}{\etat} - \frac{\log W_{t+1}}{\eta_{t+1}}\right) + \left(  \frac{\log W_{t+1}}{\eta_{t+1}} - \frac{\log W'_{t+1}}{\etat}\right)\right] + \frac{\etat}{2}\sumi q\ti(\hl\ti)^2 \label{eq:removingWprime}
\end{align}
The second term in the brackets can be further bounded using
\[
  W_{t+1} = \sumi\exp(-\eta_{t+1}\hLoss\ti) = \sumi\exp(-\etat\hLoss\ti)^{\frac{\eta_{t+1}}{\etat}} \le \left(\sumi\exp(-\etat\hLoss\ti)\right)^{\frac{\eta_{t+1}}{\etat}} = (W'_{t+1})^{\frac{\eta_{t+1}}{\etat}},
\]
where we applied Jensen's inequality to the concave function $x^{\frac{\eta_{t+1}}{\etat}}$ thanks to the assumption that $\eta_{t+1} \le \etat$. Therefore, we obtain
\[
  \left(  \frac{\log W_{t+1}}{\eta_{t+1}} - \frac{\log W'_{t+1}}{\etat}\right)\le 0,
\]
which further simplifies (\ref{eq:removingWprime}) to obtain
\[
  \sumi q\ti\hl\ti \le
  \left( \frac{\log W_t}{\etat} - \frac{\log W_{t+1}}{\eta_{t+1}}\right) + \frac{\etat}{2}\sumi q\ti(\hl\ti)^2
\]

The next step is summing over time and taking expectation
\[
  \EE{\sumt\sumi q\ti\ell\ti} \le \EE{\frac{\log W_{T+1}}{\eta_{T+1}}} + \EE{\sumt\frac{\etat}{2}\sumi \frac{q\ti}{P\ti}\ell\ti^2}.
\]
Lower-bounding $W_{T+1}$ by $\max_{k\in[N]} w_{T+1,k}$ and using the definition of exponential weights, we obtain
\[
  \EE{\sumt\sumi q\ti\ell\ti - \min_{k\in[N]} \sumt \ell_{t,k}} \le \EE{\frac{\log N}{\eta_{T+1}}} + \EE{\sumt\frac{\etat}{2}\sumi\frac{q\ti}{P\ti}}.
\]
\end{proof}

Using the definition of $p\ti$ from Equation (\ref{eq:mixed-distribution}), we can lower bound the left-hand side of the inequality in Proposition \ref{prop:standard-upper-bound} as
\[  \EE{\sumt\sumi q\ti\ell\ti - \min_{k\in[N]} \sumt \ell_{t,k}} \ge R_T - 2\EE{\sumt\gammat}.
\]
  This changes the inequality from Proposition \ref{prop:standard-upper-bound} to
  \begin{equation}
    R_T \le\EE{ 2\sumt\gammat + \frac{\log N}{\eta_{T+1}} + \sumt\frac{\etat}{2}\sumi\frac{q\ti}{P\ti}}. \label{eq:regret-upper-bound}
  \end{equation}
  The next step is to upper bound the last term inside of the expectation, starting with the inner most sum. We can split this sum into several parts, depending on the partition in which the arm is situated and upper bound each par separately.
  \begin{align}
    \sumi\frac{q\ti}{P\ti} & = \sumk\suml\sumJ \frac{q\ti}{P\ti}       \label{ex:part-J}      \\
                           & + \sumk\suml\sumJprime \frac{q\ti}{P\ti}  \label{ex:part-Jprime} \\
                           & + \sum_{i\in I_{t,K+1}} \frac{q\ti}{P\ti} \label{ex:part-I}
  \end{align}
  \textbf{Bounding expression (\ref{ex:part-J}).} Partition $\J\tkl$ contains only arms $i$ with $q\ti \in (2^{-k}, 2^{-k+1}]$ and $\deg\tk(i) \in (N2^{-l}, N2^{-l+1}]$. Therefore, $q\ti$ can be simply upper bounded by the largest possible value of $q\ti$, which is $2^{-k+1}$. We also know that $i\in \J\tkl$ has at least $N2^{-l}$ neighbors in $I\tk$ each of which has the probability of being played $p\tj$ which is at least $q\tj/2$ from the definition of $p\tj$ and the fact that $\gamma_t\le 1/2$. Now, we lower bound $P\ti$ by the smallest possible number of neighbors in $I\tk$ and the smallest possible value of $q\tj$ for neighbors $j$ of $i$ to obtain $P\ti \ge N2^{-l} 2^{-k} 2^{-1}$. This gives us the following upper bound on (\ref{ex:part-J})
  \[
    (\ref{ex:part-J}) \le \sumk\suml \frac{|\J\tkl|2^{-k+1}}{N2^{-l} 2^{-k} 2^{-1}} \le \sumk\suml \frac{|\J\tkl|2^{3}}{N2^{-l+1}} \le \sumk\suml2^3|\D\tkl|.
  \]
  The last inequality holds thanks to the fact that $|\D\tkl|$ is the size of the dominating set of $\J\tkl$ with every vertex dominating at most $N2^{-l+1}$ other nodes. This means that the number of possible edges $|\D\tkl| N2^{-l+1}$ from $\D\tkl$ to $\J\tkl$ needs to be larger than the number of vertices in $\J\tkl$.

  \textbf{Bounding expression (\ref{ex:part-Jprime}).} Since $\D'\tkl$ is a dominating set of $\J'\tkl$, we know that every vertex of $\J'\tkl$ is dominated by at least one vertex from the dominating set $\D'\tkl$. For each vertex of this dominating set, we used a mixing in Definition~\ref{def:exploration-distribution}, and therefore we observe every vertex of $\D'\tkl$ with probability at least $\gammat / ((KL+1)|\D'\tkl|)$. As a consequence, we have that for every $i\in J'\tkl$, probability of observing $i$ is at least $\gammat / ((KL+1)|D'\tkl|)$. This gives us the following upper bound on (\ref{ex:part-Jprime})
  
\[
    (\ref{ex:part-Jprime}) \le \sumk\suml \frac{(KL+1)|\D'\tkl|}{\gammat}
\]

  \textbf{Bounding expression (\ref{ex:part-I}).} We know that every arm $i$ from $I_{t,K+1}$ is such that $q\ti$, by definition, is smaller than $1/N^5$. From the definition of exploration distribution in Definition~\ref{def:exploration-distribution}, we know that the corresponding $P\ti$ is lower bounded by $\gammat / ((KL+1)N)$. This gives us the following upper bound on (\ref{ex:part-I})
  \begin{align*}
    (\ref{ex:part-I}) \le \sum_{i\in I_{t,K+1}} \frac{(KL+1)N}{\gammat N^5} \le \frac{1}{\gammat}.
  \end{align*}
  The last inequality holds thanks to the fact that $(KL + 1) \le N^2$, for any positive integer $N$, from the definition of $K$ and $L$. Note that in particular
  \begin{align*}
    (\ref{ex:part-I}) \le \sumk\suml \frac{(KL+1)\log(N)|\D'\tkl|}{\gammat},
  \end{align*}
  i.e.~we can upper bound the last term using the same expression as in the upper bound of (\ref{ex:part-Jprime}).

  Using all three bounds together, we obtain the following upper bound for $\frac{\etat}{2}\sumi \frac{q\ti}{P\ti}$
  \begin{equation}
     \frac{\etat}{2}\sumi \frac{q\ti}{P\ti} \le \frac{\etat}{2}\sumkl\left( 8|\D\tkl| + \frac{2(KL+1)\log(N)|\D'\tkl|}{\gammat}\right). \label{eq:q-over-P-bound}
  \end{equation}

  Now we can define an auxiliary learning rate $\eta\tkl$ for each individual summand as
  \begin{align*}
    \eta\tkl & \triangleq \min\left(|\D\tkl|^{-\frac{1}{2}} T^{-\frac{1}{2}} , (|\D'\tkl|)^{-\frac{1}{3}} T^{-\frac{2}{3}} \right)
  \end{align*}
  Note that $\etat$ is defined as $\min_{s\in[t],k\in[K], l\in[L]} \eta_{s,k,l}$ and therefore we can upper bound it  by $\eta\tkl$ in (\ref{eq:q-over-P-bound}) to obtain
  \begin{align*}
    \frac{\etat}{2}\sumi \frac{q\ti}{P\ti}
     & \le \sumkl \left( 4\etat|\D\tkl| + \etat^2T(C-4)|\D'\tkl|\right)                                                 \\
     & \le \sumkl \left( 4\eta\tkl|\D\tkl| + \eta\tkl^2T(C-4)|\D'\tkl|\right)                                           \\
     & = C\sumkl \max\left(|\D\tkl|^{\frac{1}{2}} T^{-\frac{1}{2}} , |\D'\tkl|^{\frac{1}{3}} T^{-\frac{1}{3}} \right) \\
     & \le C\sumkl \blackboxconstant\log N\frac{\Rstar}{T}
  \end{align*}
  for $C = 4 + (KL+1)\log(N)$. Summing over time, we obtain the following upper bound
  \begin{equation}
    \sumt\frac{\etat}{2}\sumi \frac{q\ti}{P\ti} \le C\sumkl \max\left(\delta_*^{\frac{1}{2}} T^{\frac{1}{2}} , {\delta'}_*^{\frac{1}{3}} T^{\frac{2}{3}} \right) \label{eq:bound-A}
  \end{equation}

  Now we are able to bound the last term in (\ref{eq:regret-upper-bound}). The next step is bounding the first two terms to obtain the regret upper bound. In order to do so, we can use the definitions of $\gammat$ and $\etat$ and the fact that $\{\etat\}_{t\in[T]}$ is a non-increasing sequence to obtain
  \begin{align}
    2\sumt \gammat + \frac{\log N}{\eta_{T+1}} &= 2\sumt\min\left\{\frac{1}{T\etat},\frac{1}{2}\right\} + \frac{\log N}{\eta_{T+1}} \nonumber                                                                                        \\
                   & \le 2\sumt\frac{1}{T\etat} + \frac{\log N}{\eta_{T+1}} \nonumber                                                                 \\
                   & \le 2\sumt \frac{1}{T\eta_{T+1}} + \frac{\log N}{\eta_{T+1}} = \frac{2 + \log N}{\eta_{T+1}} \nonumber                           \\
                   & \le \blackboxconstant\log N(2+\log N)\Rstar \label{eq:bound-B}
  \end{align}
  The proof is concluded by applying bounds (\ref{eq:bound-A}) and (\ref{eq:bound-B}) to expression (\ref{eq:regret-upper-bound})

\section{Discussion}
This section covers the proofs of corollaries in Section~\ref{sec:discussion}.

\subsection{Proof of Corollary~\ref{cor:rate-for-large-T}}
\label{sec:proof-rate-for-large-T}
The starting point of this proof is the Definition~\ref{def:problem-complexity-R}. The idea is to show that for $T\ge\alpha^3$, it is optimal to take $J=I$, in the minimization part of the problem complexity definition, regardless of set $I$.

Let us fix $I$ and assume that the optimization problem is maximized for some $J\not= I$. This also means that $I\setminus J$ is a non-empty set and therefore, needs to be dominated by at least one node, i.e. $\delta^V(I\setminus J) \ge 1$. Note that the independence number of a graph is always an upper bound on the dominating number since the largest independent set is connected to all the vertices in the graph. This means that the independence number $\alpha$ of $G$ can be lower bounded by the independence number $\delta^I(I)$ of the graph induced by $I$ which in turn can be lower bounded by $\delta^I(J)$, for any $J\subseteq I$. Using these observations, together with assumption $T \ge \alpha^3$, we get
\begin{align*}
    \delta^V(I\setminus J)^\frac{1}{3}T^{\frac{2}{3}} \ge T^\frac{2}{3} \ge {\alpha^\frac{1}{2}T^\frac{1}{2}} \ge {\delta^V(V)^\frac{1}{2}T^\frac{1}{2}} \ge {\delta^I(I)^\frac{1}{2}T^\frac{1}{2}} \ge {\delta^I(J)^\frac{1}{2}T^\frac{1}{2}}.
\end{align*}
However, setting $J=I$ would decrease $\delta^V(I\setminus J)^\frac{1}{3}T^{\frac{2}{3}}$ to 0 and therefore, improve the minimization problem from the problem complexity definition. Now that we know that the optimal $J$ is equal to $I$, we are ready to find $I$ that maximizes the problem complexity. Let $I$ be the largest independent set. The size of $I$ is now $\alpha$ and the only way to dominate $J = I$ using only nodes from $I$ is by using all the nodes, therefore, $\delta^I(J) = |I| = \alpha$ and the problem complexity $\Rstar = \max(0, \sqrt{\alpha T})$.

\subsection{Proof of Corollary~\ref{cor:rate-for-star-graph}}
\label{sec:proof-rate-for-small-T}

The starting point of this proof is the Definition~\ref{def:problem-complexity-R}. We know that the graph contains one vertex connected to all other nodes. Therefore, $\delta^V(I\setminus J)$ is either 0, if $I\setminus J$ is an empty set, or 1, if $I\setminus J$ is non-empty.

In case $\delta^V(I\setminus J) = 1$, the minimization part of the problem complexity suggests that $J$ can be empty set since it does not change the value of $\delta^V(I\setminus J)$ while reducing $\delta^I(J)$ to 0, regardless of the choice of $I$.

In case $\delta^V(I\setminus J) = 0$, we know that $I = J$ and therefore, the value $\delta^I(J)$ can be upperbounded by $\alpha$ - case when $I$ contains all $N-1 = \alpha$ independent vertices.

Since only these two cases are possible, we have a freedom to choose $J$ for which the value is smaller. Therefore, the problem complexity $\Rstar$ can be computed as
\[
\Rstar = \min \left\{  \alpha^{\frac{1}{2}}T^{\frac{1}{2}},\,1^{\frac{1}{3}}T^{\frac{2}{3}}\right\}.
\]
Using assumption $T<\alpha^3$ gives us
\(
\alpha^{\frac{1}{2}}T^{\frac{1}{2}} > 1^{\frac{1}{3}}T^{\frac{2}{3}}
\)
and therefore $\Rstar = T^{\frac{2}{3}}$, which concludes the proof.

\end{document}